\documentclass[runningheads]{llncs}
\usepackage[T1]{fontenc}
\usepackage{graphicx}
\usepackage{booktabs}
\usepackage[misc]{ifsym}
\newcommand{\corr}{(\Letter)}
\usepackage{amsfonts}
\usepackage{mathrsfs}
\usepackage{amsmath}
\usepackage{amssymb}
\usepackage{subscript}
\usepackage{multirow}
\usepackage{mathtools}
\usepackage{xcolor}
\usepackage{wrapfig}
\usepackage{booktabs}
\usepackage{multirow}
\usepackage{siunitx}
\usepackage{xcolor}
\usepackage{makecell}
 
\newcommand{\se}[1]{\tiny{$\pm$#1}}

\DeclareMathOperator*{\E}{\mathbb{E}}
\DeclareMathOperator*{\A}{A}

\begin{document}

\title{Beyond Binary Rewards: A Comparative Study of Reward Design for Reinforcement Unlearning}

\titlerunning{Reward Design for Reinforcement Unlearning}
% If the full title of your paper is short enough to also fit in the running head, you can omit the abbreviated paper title here. You can check as follows: if you comment out the \titlerunning line, something will appear in the header of all odd-numbered pages of your PDF from page 3 onward. This something is either the full title (in which case all is well), or the error message "Title Suppressed Due to Excessive Length". If this error message appears, you're going to want to provide an abbreviated title within the \titlerunning command, because if you won't do it, Springer will do it for you.

%N.B.: Author information (both in the \author{} and \authorrunning{} command) should only be present in the Camera-Ready Version of your paper. The version that you initially submit for review, ought to be double-blind. So, when initially submitting your paper, use:
% \author{Author information scrubbed for double-blind reviewing}
\author{Efstratios Zaradoukas\inst{1,3} \corr \and
Davide Gabrielli\inst{2} \and Bardh Prenkaj\inst{2} \and
Gjergji Kasneci\inst{1,3}}
% You may leave out the orcidID information, if you want to.
% Use \corr to indicate the corresponding author. Note the spacing around the \corr command. Only one author can be the corresponding author.

%N.B.: comment out the \authorrunning{} command for the double-blind version of your paper submitted for review. Later, if your paper is accepted, use the command for the Camera-Ready Version.
\authorrunning{E. Zaradoukas et al.}
% First names are abbreviated in the running head.
% If there is one author, write 'A.L. Benjamin'.
% If there are two authors, write 'A.L. Benjamin and C.C. Broadus Jr.'
% If there are more than two authors, '[...] et al.' is used.

\institute{Technical University of Munich, Germany \email{\{efstratios.zaradoukas,gjergji.kasneci\}@tum.de}
\and
Sapienza University of Rome, Italy\\ \email{\{gabrielli.d,prenkaj\}@di.uniroma1.it}
\and
Munich Center for Machine Learning, Germany}

\maketitle              % typeset the header of the contribution

\begin{abstract}
Machine unlearning seeks to selectively remove specific knowledge from trained language models without full retraining, a growing necessity under privacy regulations such as GDPR and the EU AI Act. Recent work has reformulated unlearning as a Reinforcement Learning with Verifiable Rewards (RLVR) problem, where models are optimized against verifiable rewards computed directly from their outputs. However, existing methods rely on sparse binary rewards that provide minimal learning signal, indicating only whether forbidden content was avoided, and limiting convergence speed. In this paper, we study how reward design affects unlearning efficiency within the Reinforcement Unlearning (RUL) framework. We introduce a principled reward decomposition framework that decouples verifiability from sparsity, and propose two new reward functions: an exponential reward that provides graded penalties based on the count of forbidden-concept occurrences, and a PageRank inspired reward that weights penalties by semantic importance. We conduct experiments on the Real World Knowledge Unlearning (RWKU) benchmark, demonstrating that both rewards consistently outperform the binary setting, while reaching similar forgetting performance up to $3\times$ faster  and preserving general model utility. Our results show that reward design is a key driver of unlearning efficiency offering a practical path toward scalable and efficient machine unlearning.

\keywords{Machine Unlearning \and Reinforcement Learning \and RLVR \and GRPO \and Large Language Models}
\end{abstract}

\section{Introduction}

The success of Large Language Models (LLMs) is largely driven by training on massive, diverse datasets, which enable them to acquire extensive linguistic and factual knowledge. However, this process also leads models to memorize parts of their training data, including personal information, copyrighted material, and other sensitive content. When such information must later be removed, either for legal, ethical, or safety reasons, retraining the model from scratch is often impractical or even prohibited. This challenge has sparked growing interest in Machine Unlearning (MU), which seeks to selectively eliminate specific knowledge from trained models while preserving their overall capabilities. The need for such techniques is reinforced by regulatory initiatives such as the General Data Protection Regulation (GDPR) \cite{regulation2016regulation}, the California Consumer Privacy Act (CCPA) \cite{pardau2018california}, which both establish a “right to be forgotten,” and the EU AI Act \cite{eu2023aiact}, which emphasize responsible data governance in AI systems.

Classical approaches to LLM unlearning fall broadly into gradient-based methods, directly manipulating model weights by ascending the loss on the forget set, and preference-based methods, trying to apply RLHF like methods for knowledge suppression. Notably, recent work has shown that unlearning can be reformulated as a verifiable Reinforcement Learning (RL) problem. Instead of relying on external reward models or heuristic editing objectives, Reinforcement Unlearning (RUL) optimizes the model against rewards that can be directly verified from its outputs. In PURGE \cite{zaradoukas2026reinforcement}, for example, the model is rewarded only when its completion does not contain forbidden concepts, thereby turning forgetting into a verifiable RL task.

Despite its effectiveness, current RUL methods rely on sparse binary rewards: a completion either satisfies the forgetting criterion or it does not. This sparsity produces noisy training signal and, more importantly, gives the model little indication of how close a completion is to successful unlearning, a key bottleneck for scaling RUL to harder unlearning tasks. This motivates a fundamental question:
\begin{quote}
    \emph{How should one design verifiable rewards for Reinforcement Unlearning?}
\end{quote}
We answer this question by proposing two theory-principled reward functions for RUL. The first, an exponential reward, provides graded penalties based on the degree of forbidden content generated, yielding a denser signal than the binary objective. The second, a PageRank reward, further exploits semantic relationships among forbidden concepts. Both remain verifiable while varying in signal density, showing that carefully designed rewards can preserve verifiability while improving performance and efficiency. Concretely, our contributions are as follows:
\\

\noindent \textbf{(1)} We formalize the trade-off between reward inductive bias and optimization speed, providing a theoretical framework for understanding how denser rewards accelerate unlearning.

\noindent\textbf{(2)} We introduce two principled reward functions for RUL that make the PURGE framework up to $3\times$ faster than the standard binary reward.

\noindent \textbf{(3)} We conduct extensive experiments on the RWKU benchmark, showing that both reward designs consistently improve efficiency while maintaining strong unlearning performance. We make our code and datasets publicly available\footnote{\url{https://github.com/strzar/beyond\_purge}}.

\section{Related Work}
\subsection{Machine Unlearning}
Machine unlearning aims to remove the influence of specific training data from a model while preserving its overall utility \cite{cao2015towards,nguyen2025survey}. Early work focused on classical computer vision and natural language processing tasks, with methods broadly falling into two categories: exact and approximate unlearning. Exact unlearning methods provide provable guarantees that a model behaves as if designated data were never seen, typically through retraining or reversible training procedures \cite{hoofnagle2019european,bourtoule2021machine,yan2022arcane}. In contrast, approximate approaches relax these guarantees and instead aim to efficiently approximate the effect of retraining from scratch, often using influence-based adjustments or incremental updates to reduce computational cost \cite{neel2021descent,ullah2021machine,ginart2019making,sekhari2021remember,guo2020certified,chien2022efficient}.

\subsection{LLM Unlearning}
With the rapid adoption of LLMs, recent work has shifted toward developing unlearning techniques tailored to these systems \cite{geng2025comprehensive,liu2024machine,liu2025rethinking,yao2024machine,kim2023propile}. Unlike earlier methods that emphasized formal guarantees, LLM unlearning research has primarily focused on empirically driven, practical approaches at scale. Early techniques explored reversing gradient-descent dynamics to negate the influence of specific data \cite{eldan2023s,jang2023knowledge,wangllm}, while more recent and effective methods rely on preference-optimization frameworks, leveraging alignment-style objectives to suppress unwanted knowledge \cite{lu2022quark,kassem2023preserving,rafailov2023direct,zhangnegative,mekala2025alternate}. Because these techniques are inherently empirical, a substantial body of work has emerged on evaluating LLM unlearning \cite{mainitofu,shimuse,li2024wmdp,dornaopenunlearning} and critiquing the limitations of current evaluation protocols \cite{scholtenprobabilistic,yuancloser}.

\subsection{Reinforcement Unlearning}
Recent work has introduced Reinforcement Learning with Verifiable Rewards (RLVR) as a promising foundation for machine unlearning, with PURGE \cite{zaradoukas2026reinforcement} and RULE \cite{zhangrule} providing some of the first instantiations of a verifiable RL-based approach. Leveraging Group Relative Policy Optimization (GRPO) \cite{shao2024deepseekmath}, PURGE formulates unlearning as an optimization process driven by a binary, verifiable reward that detects forgotten information. While this formulation offers clear verifiability, it also introduces challenges associated with reward sparsity and limited learning signals. In this work, we build on PURGE to mitigate sparsity issues and improve the overall efficiency and reliability of RUL.

\section{Preliminaries}
\subsection{What is Machine Unlearning?}
\label{sec:machine_unlearning}

%Machine unlearning refers to the process of modifying a trained model so that it ``forgets'' the influence of a designated subset of its training data, while preserving its performance on the remaining data and its ability to generalize. Formally,
Let $\mathcal V$ denote the set of finite token sequences and take $\mathcal X=\mathcal Y=\mathcal V$.
A model family is a map
\[
\pi:\mathcal X\times\Theta\to\mathcal Y,\qquad \pi_\theta(x)\coloneq f(x;\theta).
\]%
Let $\mathscr{D}=\{(x_i,y_i)\}_{i=1}^n$ be the training set and $\theta^*\in\arg\min_{\theta\in\Theta}\mathcal L(\theta;\mathscr D)$ the trained parameters with resulting model $\pi_{\theta^*}$. Given a forget set $\mathscr D_F\subset\mathscr D$, let $\mathscr D_R=\mathscr D\setminus\mathscr D_F$ be the retain set, and $\mathscr D_T\cap\mathscr D=\varnothing$ the evaluation (\textit{unseen}) set.  Additionally, let $U: (\theta^*,\mathscr D_F, \mathscr D_R)\mapsto \theta'$ be an unlearning operator that produces the \textit{unlearned model} $\pi_{\theta'}$ that satisfies the following conditions.

\noindent\textbf{(1) Retention Condition.}
$\pi_{\theta'}$ should preserve the behavior of $\pi_{\theta^*}$ on $\mathscr{D}_R$:
\begin{equation}
    \mathcal{L}(\theta';\mathscr{D}_R)
    \approx 
    \mathcal{L}(\theta^*;\mathscr{D}_R).
\end{equation}
%Although the parameters may shift, the local loss landscape around the retained examples should remain effectively unchanged.

\noindent\textbf{(2) Generalization Condition.}
$\pi_{\theta'}$ should generalize as well as $\pi_{\theta^*}$ on $\mathscr{D}_T$:
\begin{equation}\label{eq:generalization_condition}
    \mathbb{E}_{(x,y)\sim \mathscr{D}_T}\!\left[\ell(\pi_{\theta'}(x),y)\right]
    \approx
    \mathbb{E}_{(x,y)\sim \mathscr{D}_T}\!\left[\ell(\pi_{\theta^*}(x),y)\right],
\end{equation}
where $\ell$ denotes a per-example loss. %Together, these conditions ensure that the model forgets the designated data while retaining its predictive capabilities.

\subsection{Reinforcement Learning with Verifiable Rewards}\label{sec:grpo}
Reinforcement Learning with Verifiable Rewards (RLVR) \cite{lamberttulu} is a training para\-digm that finetunes LLMs using verifiable reward signals. In this work, we adopt Group Relative Policy Optimization (GRPO)~\cite{shao2024deepseekmath} as our optimization algorithm. GRPO is a policy-gradient method that eliminates the need for a learned critic by estimating advantages from group-normalized reward comparisons. Given a query $q\sim P(Q)$, GRPO samples a group of $G$ outputs $\{o_i\}_{i=1}^G \sim \pi_{\theta_{\text{old}}}(\cdot\mid q)$ from the current policy and scores each with a reward function $\phi:\mathcal{Y}\to\mathbb{R}$. The advantage of each output is computed by normalizing its reward against the group's empirical mean and standard deviation, as defined in~\eqref{eq:grpo-obj-final}. The policy is then updated by maximizing the PPO-style clipped surrogate objective in~\eqref{eq:grpo-obj-final}. To prevent excessive deviation from the original model, a Kullback–Leibler (KL) divergence penalty toward a reference policy $\pi_{\theta^*}$ is added to the objective.
\begin{equation}\label{eq:grpo-obj-final}
\mathcal{L}(\theta)=
\E_{\substack{q,\\\{o_i\}}}\bigg[
\E_{\substack{i \in [G],\\ t\in [|o_i|]}}\big(
\A_{\substack{\theta,\phi}}(q,o_i,t)-\beta\mathrm{KL}[\pi_{\theta}\|\pi_{\theta^*}]\big)\bigg],
\end{equation}%
where
\begin{align*}
    \A_{\substack{\theta,\phi}}(q,o_i,t) = \min\!\left(
\phi_{i,t}(\theta)\hat{A}_\phi(o_i),\;
c_{i,t}(\theta)\hat{A}_\phi(o_i)
\right)\\
\text{s.t. } \quad\qquad\qquad\qquad \phi_{i,t}(\theta)=\frac{\pi_{\theta}(o_{i,t}\mid q,o_{i,<t}) }{ \pi_{\theta_{\text{old}}}(o_{i,t}\mid q,o_{i,<t})}\\
c_{i,t}(\theta) = \text{clip}\!\left(\phi_{i,t}(\theta),\,1-\epsilon,\,1+\epsilon\right)\\
\hat{A}_{\phi}(o_i) =\frac{\phi(o_i)- {\rm mean}(\phi(\{o_i\}))}{{\rm std}(\phi(\{o_i\}))},
\end{align*}%
and the KL term is estimated via the unbiased approximation of Schulman \cite{schulman2020approximating}:
\begin{align*}
\mathrm{KL}[\pi_{\theta}\|\pi_{\theta^*}]=
\frac{\pi_{\mathrm{ref}}(o_{i,t}\mid q,o_{i,<t})}{\pi_{\theta}(o_{i,t}\mid q,o_{i,<t})}
-
\log\!\frac{\pi_{\theta^*}(o_{i,t}\mid q,o_{i,<t})}{\pi_{\theta}(o_{i,t}\mid q,o_{i,<t})}
-1 \, ,
\end{align*}
which is guaranteed to be non-negative and numerically stable in the token-level autoregressive setting. Here $\pi_{\theta^*}$ denotes the frozen reference policy (i.e., the model from which we want to unlearn).

\subsection{Policy Unlearning through Relative Group Erasure}
\label{sec:purge}

Policy Unlearning through Relative Group Erasure (PURGE)~\cite{zaradoukas2026reinforcement} reformulates LLM unlearning as a 
verifiable reinforcement learning problem, building directly on GRPO. Rather 
than relying on gradient manipulation or external reward models, it frames 
forgetting as an optimization task with a measurable success criterion: a model 
has successfully unlearned a concept when its completions no longer contain any 
token or phrase associated with that concept. Since the original training dataset 
$\mathscr{D}$ is inaccessible in deployed models, PURGE first constructs a 
synthetic forget corpus from the model's own outputs, then uses this corpus to 
drive a GRPO-based policy update. Concretely, for a set of concepts 
$\mathcal{C} = \{c_1, \dots, c_m\}$ to be forgotten, PURGE operates in four 
stages.

\paragraph{Probe Filtering.}
For each concept $c_k \in \mathcal{C} = \{c_1, \dots, c_m\}$ to be forgotten, PURGE uses a proxy dataset $\mathscr{D}' = \{(q_i, y_i, c_k)\}$, originally used in the Rejection Tuning method~\cite{jinrwku}, that contains QA pairs for which the model's response $\hat{y}_i = \pi_{\theta^*}(q_i)$ is consistent with the reference answer $y_i$.

\paragraph{Entity Extraction.}
The retained model responses $\{\hat{y}_i\}$ are passed to an external LLM\footnote{PURGE uses GPT-4 for this step, though the authors demonstrate robustness to the choice of model.} alongside $c_k$ to perform named entity recognition and salient-concept mining, yielding a candidate entity set $\tilde{\mathcal{E}}(c_k) = g(c_k, \{\hat{y}_i\})$.

\paragraph{Forget Set Construction.}
From $\tilde{\mathcal{E}}(c_k)$, the top-K most informative entities are selected (with manual validation) to form the forget set $\mathscr{D}_F(c_k) = \mathrm{Top}_{K}(\tilde{\mathcal{E}}(c_k))$.

\paragraph{Binary Reward and Policy Update.}
PURGE guides the GRPO procedure via a binary reward that returns $1$ if and only if the model's completion avoids all forbidden entities:
\begin{equation}\label{eq:binary_reward}
    \phi(y) = \mathbf{1}\bigl[\,y \cap \mathscr{D}_F = \emptyset\,\bigr].
\end{equation}

\noindent Despite its effectiveness, this binary formulation provides a sparse learning signal: a completion either satisfies the forgetting criterion or it does not, with no gradient information about \emph{how close} a partial response is to successful unlearning. This sparsity motivates the reward design framework developed in the following section.

\section{Reward Design Framework for RUL}\label{sec:reward_design}
Here, we generalize the reward design space for RUL and identify both the theoretical motivation and practical limitations of PURGE's approach. Hence, we propose a generic reward framework that decouples verifiability from sparsity, enabling principled exploration of the reward design space.

\subsection{Foundational Objects}
\begin{definition}[Completion and Forget Set]
Let $y = (y_1, \ldots, y_T) \in \mathcal{Y}$ denote a model completion, where $\mathcal{Y}$ is the space of finite token sequences. Let $F = \{f_1, \ldots, f_m\}$ denote the forget set.
\end{definition}
\begin{definition}[Forbidden-Match Operator]
For a completion $y$ and forget set $F$, define the forbidden-match operator:
\begin{equation}
M(y; F) = (m_1(y), \ldots, m_m(y)) \in \mathbb{N}_0^m,
\end{equation}
where $m_j(y)$ counts the number of surface-level occurrences of forget item $f_j$ in completion $y$.
\end{definition}

%\begin{remark}
%The match operator captures only surface-level co-occurrence. It does not account for semantic equivalence, paraphrases, or implicit references to forbidden concepts. This is a fundamental limitation when defining verifiable rewards.
%\end{remark}

\subsection{Generic Reward Structure}

\begin{definition}[Verifiable Reward]
A verifiable reward is any function of the form:
\begin{equation}
\phi(y; F, \psi) : \mathcal{Y} \times \mathcal{P}(\mathcal{V}) \times \Psi \to [0, 1],
\end{equation}
where:

\noindent$\bullet$ $\mathcal{V}$ is the vocabulary (set of all tokens);

\noindent $\bullet$ $\mathcal{P}(\mathcal{V})$ denotes the power set (all possible forget sets);

\noindent $\bullet$ $\psi \in \Psi$ is auxiliary side information (publicly available, independent of training data)

\noindent $\bullet$ The reward depends on $y$ and $F$ only through the forbidden-match operator $M(y; F)$ and side information $\psi$.
\end{definition}%
We argue that a reward is verifiable if it can be computed without access to the original training data, relying only on (1) the completion $y$, (2) the forget set $F$ (extracted from model outputs via automated methods), (3) publicly available auxiliary information $\psi$ (embeddings, graphs, similarity matrices).

\begin{claim}[Decoupling Verifiability from Sparsity]
Verifiability is orthogonal to sparsity. A reward can be fully verifiable while being dense (e.g., taking many values in $[0,1]$). PURGE conflates these concepts, treating sparsity as a requirement for verifiability rather than an implementation choice.
\end{claim}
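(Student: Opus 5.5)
The claim is essentially an existence statement: there is a function that satisfies every clause of the Verifiable Reward definition and yet takes many distinct values in $[0,1]$. I would therefore prove it constructively. The plan is to exhibit an explicit reward that factors through the forbidden-match operator $M(y;F)$ and whose image is large. I would also observe that the binary reward \eqref{eq:binary_reward} is just one particular such factorization, which shows that PURGE's sparsity is a design choice rather than a consequence of verifiability.

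First, I would formalize sparsity as the cardinality of the image $\phi(\mathcal Y;F,\psi)\subseteq[0,1]$ (or of the level sets of $\phi$ over completions). Second, I would note that the binary reward is itself verifiable. It can be written as $\phi_{\mathrm{bin}}(y)=h_{\mathrm{bin}}(M(y;F))$ with $h_{\mathrm{bin}}(v)=\mathbf 1[\|v\|_1=0]$, and its image is $\{0,1\}$. This places PURGE inside the framework. Third, I would construct a dense competitor of the same form. Concretely, take $h(v)=\exp(-\lambda\|v\|_1)$ with $\lambda>0$, a graded count penalty, or more generally $h(v)=\exp(-\lambda\, w^\top v)$ with weights $w\in\mathbb R_{>0}^m$ taken from the side information $\psi$, for example a PageRank vector on a public concept graph. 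This $\phi=h\circ M$ maps into $(0,1]$. It depends on $y$ only through $M(y;F)$ and on $\psi$, so it needs no access to $\mathscr D$ and meets every bullet of the definition.

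Fourth, I would establish density. Fix any $f_1\in F$ and consider completions $y^{(k)}$ containing exactly $k$ surface occurrences of $f_1$ and no other forget items, which exist because $\mathcal Y$ consists of all finite token sequences. Then $\phi(y^{(k)})=e^{-\lambda k}$, and these values are pairwise distinct for $k\in\mathbb N_0$. The image is therefore countably infinite, in contrast to the two-point image of $\phi_{\mathrm{bin}}$. For the weighted version, choosing rationally independent weights makes $h$ injective on $\mathbb N_0^m$, so distinct match profiles already receive distinct rewards. This gives orthogonality in both directions: verifiability does not force sparsity, and conversely a sparse reward need not be verifiable. For the converse, a thresholded reward that queries $\mathscr D$ would serve.

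The main obstacle I expect is not technical but definitional. ``Sparsity'' is not formally defined in the paper, so the argument hinges on adopting a precise notion, image size or level-set granularity, that matches the intended reading. I would first try image cardinality. If a stronger notion tied to GRPO is wanted, I would add a short remark. Under $\phi_{\mathrm{bin}}$, a group in which every completion contains some forbidden item gives $\hat A_\phi\equiv 0$, since the standard deviation vanishes. Under the dense $h$, the advantages are nonzero whenever the group's match counts differ, which is the operational sense in which the signal is denser.
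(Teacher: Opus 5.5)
Your proposal is correct and takes essentially the paper's route: the paper gives no separate proof and supports the claim by construction, via the decomposition $\phi=\Gamma(I(y;F,\psi);\mathcal H)$ and the exponential and PageRank rewards, which factor through $M(y;F)$ and $\psi$ yet take many values in $[0,1]$; this is exactly your $h\circ M$ with $\lambda=1/\tau$, and your image-cardinality notion of sparsity makes it more precise. One small repair: if one forget item contains another as a substring (e.g.\ a full name and the surname), a $y^{(k)}$ containing occurrences of $f_1$ and no other forget item may not exist, but you only need $m_1(y^{(k)})\ge k$, so the total count is unbounded in $k$ and the image of $e^{-\lambda\|M\|_1}$ is still infinite.
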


\subsection{Reward Decomposition Framework}
We propose decomposing any reward into two components:
\begin{definition}[Reward Decomposition]
A verifiable reward admits the decomposition:
\begin{equation}
\phi(y; F, \psi) = \Gamma\left(I(y; F, \psi); \mathcal{H}\right)
\end{equation}
where:

\noindent $\bullet$  $I(y; F, \psi)$ is the information component---the data extracted from $M(y; F)$ and $\psi$;

\noindent $\bullet$ $\Gamma : \text{Range}(I) \to [0, 1]$ is the transformation function---maps extracted information into the reward range;

\noindent $\bullet$ $\mathcal{H}$ represents hyperparameter configuration.
\end{definition}%
The reward design space is spanned by two orthogonal choices: (1) \textbf{Information Extraction} ($I$): Which aspects of $M(y; F)$ and $\psi$ to exploit; and (2) \textbf{Reward Transformation} ($\Gamma$): How to map this information into $[0,1]$. These choices determine the learning signal structure and inductive biases, but neither is dependent on verifiability.

\subsection{Information Hierarchy}

Rather than proposing specific rewards immediately, we characterize the space of possible information components via an expressiveness hierarchy.

\begin{definition}[Information Hierarchy]
Order the possible information components by expressiveness:
\begin{equation}
I_0 \prec I_1 \prec I_2 \prec \cdots \prec I_*
\end{equation}
where $I_i \prec I_j$ means that $I_i$ can be computed as a deterministic function of $I_j$ (i.e., $I_j$ is at least as informative as $I_i$).
\end{definition}%
The simplest information component uses only the minimal information about violations:
\begin{equation}\label{eq:binary_generalized}
I_{\text{min}}(y; F) = \mathbf{1}\left[\bigvee_{j=1}^m m_j(y) > 0\right]
\end{equation}
This is a single bit indicating whether any forbidden item appears. More informative components can be used:
\begin{align*}
I_{\text{count}}(y; F) &= \sum_{j=1}^m m_j(y),\\
I_{\text{identity}}(y; F) &= (m_1(y), \ldots, m_m(y)),\\
I_{\text{structured}}(y; F, \psi) &= (M(y; F), w(F; \psi)),
\end{align*}%
where $w(F; \psi) : F \to [0,1]^m$ assigns importance weights to each forget item based on auxiliary information $\psi$.

\section{Instantiating the Hierarchy}
\label{sec:hierarchy}

We now instantiate the generic framework with three concrete rewards, each at a higher level of the information hierarchy.

\paragraph{Level 1: Binary Reward}

The first instantiation (following the proposed implementation by PURGE \cite{zaradoukas2026reinforcement}) uses minimal information:
\begin{equation}
\label{eq:binary_hierarchy}
\phi_{\text{bin}}(y; F) = \mathbf{1}\left[\sum_{j=1}^m m_j(y) = 0\right]
\end{equation}%
This reward extracts $I = I_{\text{min}}$ and applies $\Gamma = \Gamma_{\text{thresh}}$ at zero. It returns 1 if and only if no forbidden items appear in the completion.

\paragraph{Level 2: Exponential Reward}

The second instantiation enriches the information component to the count level:
\begin{equation}
\phi_{\text{exp}}(y; F; \tau) = \exp\left(-\frac{\sum_{j=1}^m m_j(y)}{\tau}\right)
\end{equation}%
This extracts $I = I_{\text{count}} = \sum_{j=1}^m m_j(y)$ (total forbidden-match count) and applies smooth exponential decay with decaying constant $\tau > 0$. The reward is strictly decreasing in the total count and satisfies $\phi_{\text{exp}} = 1$ if the count is zero.

\begin{wrapfigure}{r}{.40\textwidth}
    \flushleft
    \vspace{-3.5em}
    \includegraphics[width=\linewidth]{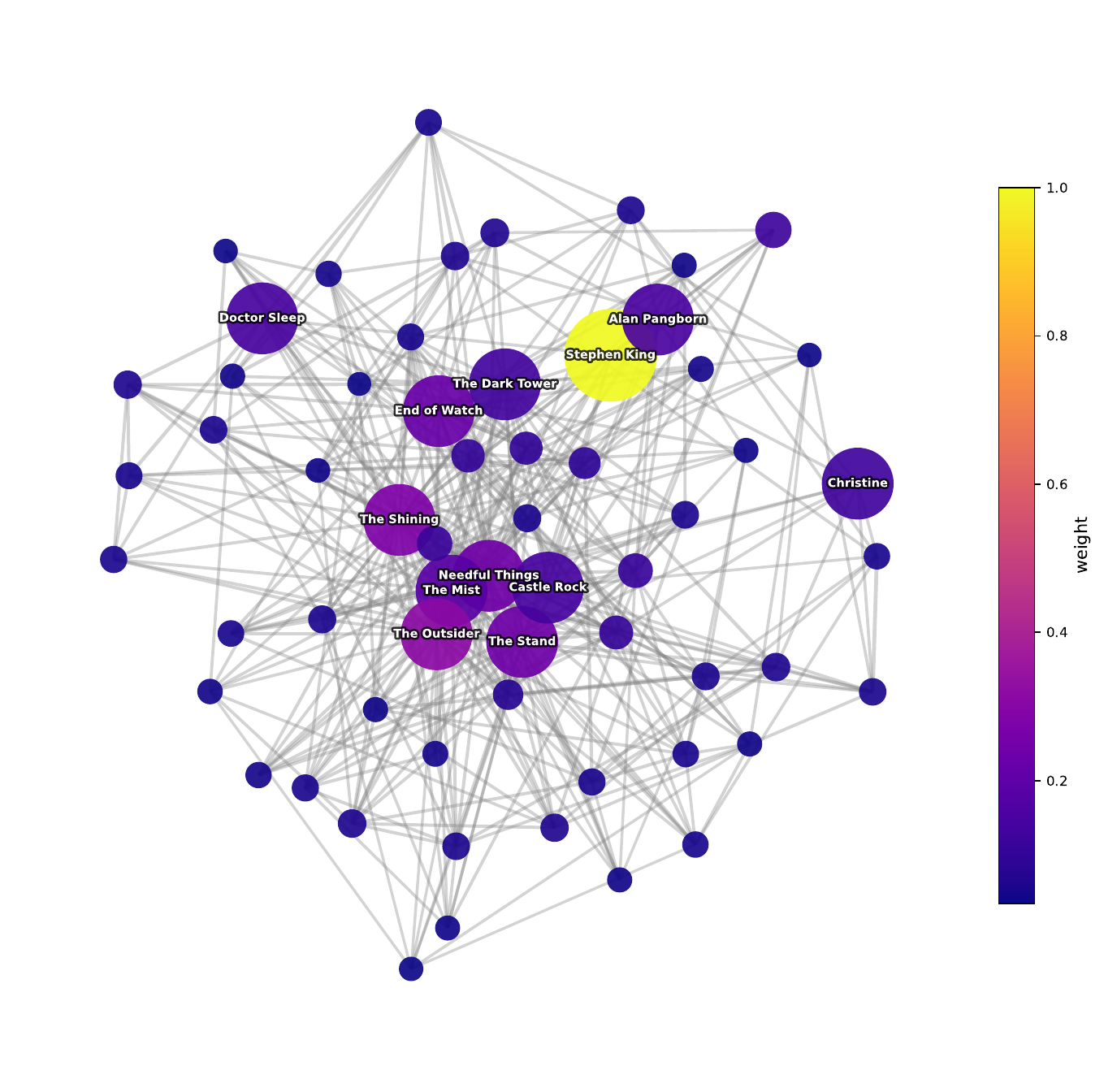}
    \caption{PageRank semantic graph for Stephen King. Node color and size reflect importance weights. The author node dominates the graph, with works such as \textit{The Shining} and \textit{The Stand} forming the high-weight core.}
    \label{fig:pagerank-graph}
\end{wrapfigure}

\paragraph{Level 3: PageRank Reward.}
The third instantiation exploits the structural relationships among forget
items. We embed every forget item with a dense sentence encoder and build a
sparse, weighted semantic graph $G=(V,E,W)$ whose vertices are the forget
items. We sparsify the graph with a $k$-nearest-neighbour rule: each item links only to its $k$
most similar neighbours whose cosine similarity exceeds a threshold $\theta$,
and every retained edge is weighted by that similarity:
\begin{equation*}
V = F,
\end{equation*}
\begin{equation*}
% V = F, \qquad
E = \bigl\{(i,j) : f_j \in \mathrm{kNN}_k(f_i),\ \cos(e_i,e_j)\ge\theta \bigr\},
\qquad
% W_{ij} = \cos(e_i, e_j).
\end{equation*}
\begin{equation}
W_{ij} = \cos(e_i, e_j).
\end{equation}
We then run a personalized PageRank biased to restart at the primary target
$f_1$:
\begin{equation}
\pi^{\mathrm{PR}} = \alpha\, P^{\top}\pi^{\mathrm{PR}} + (1-\alpha)\, e_{f_1},
\end{equation}
where $P$ is the row-normalized weighted adjacency induced by $W$,
$\alpha\in(0,1)$ is the damping factor, and $e_{f_1}$ is the one-hot
personalization vector on the target entity. A forget item receives a high
score when it is reachable through high-similarity edges from $f_1$, i.e.,
when it is semantically central to the target. Normalizing the resulting
distribution yields importance weights $w_j\in[0,1]$ (with $w_1=1$ under
max-normalization); alternative redistribution schemes (softmax, rank) are
studied in Section~\ref{sec:ablations}. Unlike the previous levels, which treat all forget items
uniformly, PageRank explicitly encodes the semantic hierarchy of the forget
set: the primary target and its closest associates accumulate the highest
weights, so the model is penalized most heavily for the violations that matter
most. PageRank thus generalizes the count-level signal by replacing uniform
weighting with semantic weighting.

% The reward applies a clipped linear penalty weighted by semantic importance:
% \begin{equation}
% \phi_{\mathrm{pr}}(y; F)
% = \mathrm{clip}\!\left(1 - \sum_{j=1}^{m} w_j \cdot \mathbf{1}\bigl[m_j(y)>0\bigr],\; 0,\; 1\right),
% \end{equation}
% where $\mathbf{1}[\cdot]$ is the indicator function. This extracts
% $I = I_{\mathrm{structured}} = (M(y;F), w(F))$ and weights each violation by its
% semantic importance.

\section{Experiments}

\subsection{Experimental Setup}
For our experiments, we employ the Real World Knowledge Unlearning (RWKU) benchmark \cite{jinrwku}. RWKU is structured around four evaluation splits, each reported with its own metrics. The \textit{Forget} split uses three probe types: \textit{FB} (Fill-in-the-Blank cloze completion), \textit{QA} (Question-Answering), and \textit{AA} (Adversarial-Attack jailbreak probes), all scored via ROUGE-L recall. The \textit{Neighbor} split reuses the FB/QA probe formats on knowledge adjacent to, but distinct from, the forget target. The \textit{MIA} split reports LOSS-based Membership Inference Attack scores on \textit{FM} (Forget Member) and \textit{RM} (Retain Member) fragments, where successful unlearning yields higher FM relative to RM. The \textit{Utility} split covers five capability benchmarks: \textit{GA} (MMLU, General Ability), \textit{RA} (Big-Bench-Hard, Reasoning Ability), \textit{TRU} (TruthfulQA, Truthfulness), \textit{FAC} (TriviaQA, Factuality), and \textit{FLU} (Fluency, bigram/trigram entropy). We conduct our experiments using Phi-3-Mini-4K-Instruct, a 3.8B-parameter language model. Our approach is compared against two key baselines: (1) the \emph{original model} prior to knowledge removal (\textsc{Base}), and (2) PURGE \cite{zaradoukas2026reinforcement} (labeled as \emph{Binary} in all tables and figures), which employs the sparse binary reward $\phi_{\text{bin}}$ of Eq.~(\ref{eq:binary_hierarchy}) and is equivalent to our Level 1 instantiation in Section \ref{sec:hierarchy}. To ensure a fair comparison, we reproduce the performance of both the original model and the PURGE/Binary baseline using the default hyperparameters reported in \cite{zaradoukas2026reinforcement}. Further experimental details are deferred to Appendix \ref{app:appendix_experiments}.

\subsection{Main Results}

% \begin{figure}[ht]
%     \centering
%     \includegraphics[width=\textwidth]{assets/heatmap_step1500.pdf}
%     \caption{Heatmap of unlearning performance across all reward designs on the RWKU benchmark, evaluated after 1500 training steps and 20 unlearning targets. Binary = PURGE \cite{zaradoukas2026reinforcement} according to our framework. Each cell reports the mean $\pm$ standard deviation across concepts. Colors indicate the signed $\Delta$ relative to \textsc{Base} ($\downarrow$ Lower is better, $\uparrow$ Higher is better).
%     }
%     \label{fig:heatmap_1500vsbase}
% \end{figure}

\begin{table}[ht]
\caption{Unlearning performance across all reward designs on the RWKU benchmark, evaluated after 1500 training steps and 20 unlearning targets. Binary = PURGE \cite{zaradoukas2026reinforcement} according to our framework. Each cell reports the mean $\pm$ standard deviation across concepts. ($\downarrow$ Lower is better, $\uparrow$ Higher is better). \textbf{Bold} indicates the best result.}
\label{tab:main_results}
\resizebox{\textwidth}{!}{%
\begin{tabular}{l ccc cc cc ccccc}
\toprule
& \multicolumn{3}{c}{Forget $\downarrow$} & \multicolumn{2}{c}{Neighbor $\uparrow$} & \multicolumn{2}{c}{MIA} & \multicolumn{5}{c}{Utility $\uparrow$} \\
\cmidrule(lr){2-4} \cmidrule(lr){5-6} \cmidrule(lr){7-8} \cmidrule(lr){9-13}
Method & FB & QA & AA & FB & QA & FM $\uparrow$ & RM $\downarrow$ & GA & RA & TRU & FAC & FLU \\
\midrule
 
Base
& \makecell{0.657\\\se{0.216}} & \makecell{0.539\\\se{0.184}} & \makecell{0.629\\\se{0.146}}
& \makecell{0.604\\\se{0.132}} & \makecell{0.537\\\se{0.240}}
& \makecell{-10.152\\\se{0.373}} & \makecell{-9.868\\\se{0.083}}
& \makecell{0.676\\\se{0.045}} & \makecell{0.412\\\se{0.013}} & \makecell{0.554\\\se{0.057}} & \makecell{0.369\\\se{0.031}} & \makecell{33.773\\\se{0.415}} \\
\midrule
 
Binary
& \makecell{0.372\\\se{0.164}} & \makecell{0.365\\\se{0.213}} & \makecell{0.408\\\se{0.167}}
& \makecell{0.458\\\se{0.226}} & \makecell{\textbf{0.504}\\\se{0.261}}
& \makecell{\textbf{-40.219}\\\se{0.626}} & \makecell{\textbf{-38.681}\\\se{0.199}}
& \makecell{0.683\\\se{0.034}} & \makecell{\textbf{0.423}\\\se{0.028}} & \makecell{0.533\\\se{0.048}} & \makecell{0.398\\\se{0.037}} & \makecell{132.270\\\se{0.875}} \\
 
Exponential Decay
& \makecell{0.379\\\se{0.185}} & \makecell{0.382\\\se{0.215}} & \makecell{0.417\\\se{0.159}}
& \makecell{\textbf{0.474}\\\se{0.212}} & \makecell{0.495\\\se{0.264}}
& \makecell{-40.265\\\se{0.626}} & \makecell{-38.640\\\se{0.202}}
& \makecell{0.682\\\se{0.036}} & \makecell{0.422\\\se{0.024}} & \makecell{\textbf{0.535}\\\se{0.049}} & \makecell{\textbf{0.403}\\\se{0.037}} & \makecell{\textbf{132.886}\\\se{0.741}} \\
 
PageRank Softmax
& \makecell{\textbf{0.346}\\\se{0.149}} & \makecell{\textbf{0.350}\\\se{0.186}} & \makecell{\textbf{0.390}\\\se{0.137}}
& \makecell{0.473\\\se{0.212}} & \makecell{0.498\\\se{0.233}}
& \makecell{-40.280\\\se{0.628}} & \makecell{-38.664\\\se{0.198}}
& \makecell{\textbf{0.684}\\\se{0.034}} & \makecell{\textbf{0.423}\\\se{0.023}} & \makecell{\textbf{0.535}\\\se{0.047}} & \makecell{0.399\\\se{0.042}} & \makecell{132.460\\\se{0.778}} \\
 
\bottomrule
\end{tabular}
}
\end{table}

Table~\ref{tab:main_results} reports the full evaluation after 1{,}500 training steps.
All reward designs substantially reduce Forget scores relative to \textsc{Base}, with PageRank achieving the strongest forgetting on the forget split, consistent with our theoretical prediction that semantically weighted penalties direct optimization toward the most informative violations first. The uniform degradation on Neighbor FB reflects the well-known forget--retain trade-off in machine
unlearning~\cite{jinrwku}, and is shared across all reward designs, indicating it is driven by the optimization objective rather than reward choice. Crucially, utility metrics (GA, RA, TRU, FAC, FLU) remain stable across all designs, shifting by at most $0.014$ relative to \textsc{Base}, confirming that richer rewards do not degrade general model capability.

\begin{wrapfigure}{l}{.48\textwidth}
    \flushright
    \includegraphics[width=\linewidth]{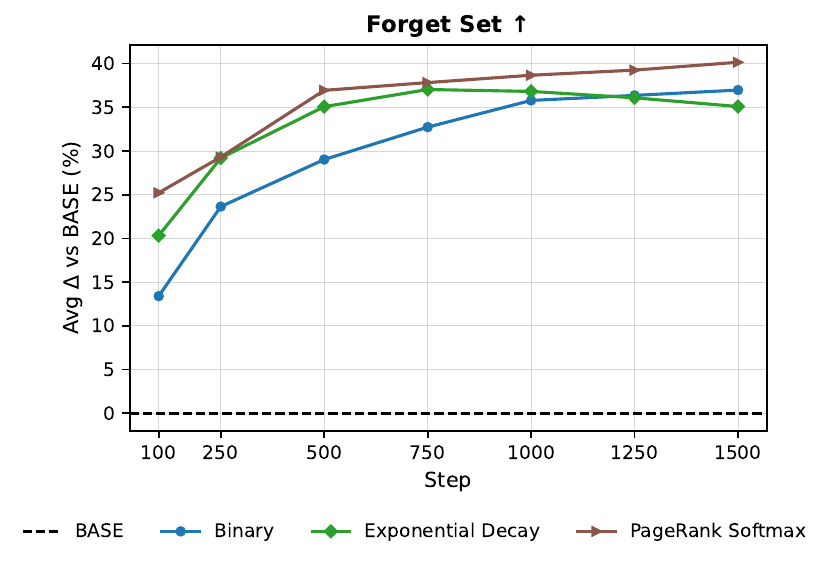}
    \caption{Average $\Delta$ over BASE on the Forget Set across training for all reward designs (Binary = PURGE \cite{zaradoukas2026reinforcement}). Higher values indicate greater unlearning. PageRank Softmax achieves the strongest, most efficient gains throughout training.}
    \label{fig:forget_dynamics}
\end{wrapfigure}

Figure~\ref{fig:forget_dynamics} shows average improvement over \textsc{Base} on the forget set across training steps. PageRank (Softmax Variant) is the most effective and most efficient reward design: at convergence (step 1500) it achieves 40.2\% improvement, surpassing Binary ($+3.1\%$) and Exponential Decay ($+5.1\%$). The efficiency advantage is more striking. PageRank Softmax reaches $37.1\%$ by step 500, a threshold Binary does not cross until step 1500, achieving equivalent forget performance with 3$\times$ less training steps. The effect of the different reward design can be seen even from the first 100 steps, at step 100, PageRank Softmax scores $25.2\%$ versus $13.3\%$ for Binary, a $\sim\!90\%$ relative gain, indicating that PageRank-based reward design provides a substantially richer learning signal from the start.

\subsection{Ablations}
\label{sec:ablations}

\subsubsection{What is the optimal decaying constant $\tau$ in the exponential reward?}

\begin{figure}[ht]
    \centering
    \begin{minipage}[b]{0.48\textwidth}
        \centering
        \includegraphics[width=\textwidth]{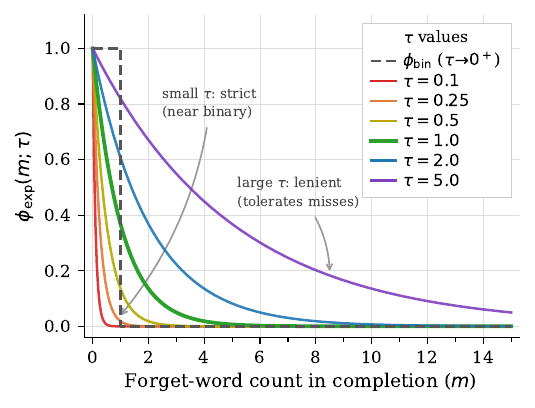}
    \end{minipage}
    \hfill
    \begin{minipage}[b]{0.48\textwidth}
        \centering
        \includegraphics[width=\textwidth]{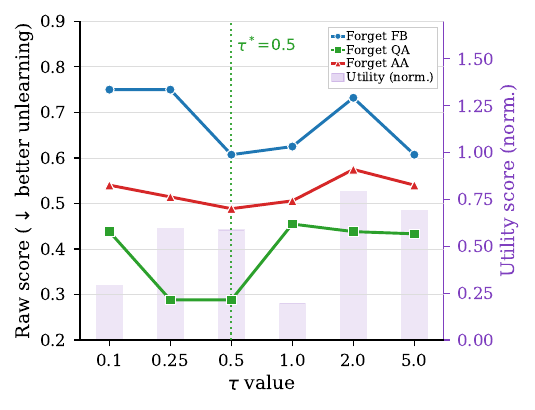}
    \end{minipage}
    \caption{%
        \textbf{Left:} Exponential reward $\phi_{\exp}(m;\tau) = e^{-m/\tau}$ vs. the forbidden-word count $m$ for varying $\tau$. Small $\tau$ approaches the binary reference (sharp drop at $m=1$); large $\tau$ produces
        a lenient, slowly decaying curve.
        \textbf{Right:} Forget-quality metrics (FB, QA, AA; lower $=$ better) at convergence across $\tau$.
        The dotted line marks $\tau = 0.5$, the value that jointly minimises
        all three forget metrics and maximises the utility metric.
    }
    \label{fig:tau_ablation}
\end{figure}

Figure~\ref{fig:tau_ablation} (left) illustrates how the decaying constant $\tau$
governs the shape of the exponential reward $\phi_{\exp}(y; \mathcal{F}; \tau) =
\exp(-m/\tau)$, where $m$ is the total forbidden-word count in the
completion. At one extreme, we expect small values of $\tau$ to cause the reward to collapse sharply
after even a single violation, closely approximating the binary reward; at the other
extreme, large values of $\tau$ to produce a slowly decaying curve that tolerates many
violations with only mild penalization. Figure~\ref{fig:tau_ablation} (right) reports how this choice translates into unlearning performance across the three forget-quality metrics (FB, QA, and AA) evaluated at convergence for $\tau \in \{0.1, 0.25, 0.5, 1.0, 2.0, 5.0\}$. The results reveal that very small $\tau$ values ($\tau = 0.1$) yield weak forgetting, as the near-binary reward reintroduces the signal-sparsity problem \cite{zaradoukas2026reinforcement}. Performance improves substantially as $\tau$ increases toward
$0.5$, where all three metrics reach their joint minimum, confirming that a moderately strict penalty best balances gradient informativeness and optimization pressure. Beyond this point, larger values ($\tau \geq 1.0$) gradually degrade forgetting quality on FB and AA, as the increasingly lenient reward fails to impose sufficient pressure to suppress forbidden content. Overall, these results identify $\tau = 0.5$ as the optimal operating point (highlighted by the dotted vertical line in Figure~\ref{fig:tau_ablation}, right), and demonstrate that tuning the $\tau$ constant is an impactful lever for maximizing the efficiency gains of the exponential reward.

\subsubsection{PageRank Variants Analysis}
To better distribute importance weights across forbidden phrases, we explore several variants of PageRank. The standard implementation produces a power-law distribution over the graph of connections between forbidden phrases, causing weights to collapse after the first two or three nodes. To address this, we propose two additional variants: \textbf{(1) PageRank-Softmax} applies a temperature-scaled softmax over the raw scores, converting them into a proper probability distribution; this flattens the power-law collapse by exponentially compressing the gap between high- and low-weight nodes, producing a principled redistribution of the weights, \textbf{(2) PageRank-Linear} discards score magnitudes entirely, using raw PageRank weights only as an ordinal signal and then reassigning weights in strictly decreasing linear (or exponential, see Appendix \ref{app:pagerank}) order from 1 (highest-ranked node) to 0 (lowest-ranked node). As shown in Table~\ref{tab:pagerank_variants_ablation}, \textbf{PageRank-Softmax} achieves the strongest forgetting performance, attaining the lowest Forget scores on FB, QA and AA while leading on Neighbor QA. Applying a softmax transformation spreads penalty mass more evenly across the forget set while preserving semantic ordering. \textbf{PageRank-Linear} occupies an intermediate position: linear rescaling partially corrects the power-law collapse inherent in raw PageRank scores but treats all score intervals as uniformly spaced, introducing its own distortion. Utility metrics remain stable across all three variants. Overall, these results suggest soft weight redistribution over linear rescaling as a more principled design strategy for Pagerank.

% \begin{figure}[h]
%     \centering
%     \includegraphics[width=\textwidth]{assets/ablation_pagerank_pct.pdf}
%     \caption{Comparison of PageRank variants across Forget, Neighbor, MIA, and
%     Utility metrics. Colors indicate deviation from the per-column mean
%     (green = above mean, red = below mean).}
%     \label{fig:pagerank_variants_heatmap}
% \end{figure}

\begin{table}[ht]
\caption{Comparison of PageRank variants on the RWKU benchmark. Each cell reports the mean $\pm$ standard deviation across 20 unlearning targets. ($\downarrow$ Lower is better, $\uparrow$ Higher is better). \textbf{Bold} indicates the best result in each column.}
\label{tab:pagerank_variants_ablation}
\resizebox{\textwidth}{!}{%
\begin{tabular}{l ccc cc cc ccccc}
\toprule
& \multicolumn{3}{c}{Forget $\downarrow$} & \multicolumn{2}{c}{Neighbor $\uparrow$} & \multicolumn{2}{c}{MIA} & \multicolumn{5}{c}{Utility $\uparrow$} \\
\cmidrule(lr){2-4} \cmidrule(lr){5-6} \cmidrule(lr){7-8} \cmidrule(lr){9-13}
Method & FB & QA & AA & FB & QA & FM $\uparrow$ & RM $\downarrow$ & GA & RA & TRU & FAC & FLU \\
\midrule
 
PageRank
& \makecell{0.397\\\se{0.156}} & \makecell{0.354\\\se{0.194}} & \makecell{0.393\\\se{0.152}}
& \makecell{\textbf{0.477}\\\se{0.225}} & \makecell{0.453\\\se{0.257}}
& \makecell{-40.305\\\se{0.623}} & \makecell{\textbf{-38.677}\\\se{0.199}}
& \makecell{0.683\\\se{0.034}} & \makecell{\textbf{0.426}\\\se{0.023}} & \makecell{\textbf{0.536}\\\se{0.048}} & \makecell{\textbf{0.401}\\\se{0.040}} & \makecell{\textbf{132.888}\\\se{0.754}} \\
 
PageRank Linear
& \makecell{0.372\\\se{0.162}} & \makecell{0.365\\\se{0.155}} & \makecell{0.416\\\se{0.153}}
& \makecell{0.456\\\se{0.211}} & \makecell{0.479\\\se{0.237}}
& \makecell{-40.292\\\se{0.616}} & \makecell{-38.665\\\se{0.202}}
& \makecell{0.682\\\se{0.034}} & \makecell{\textbf{0.426}\\\se{0.024}} & \makecell{0.534\\\se{0.046}} & \makecell{0.395\\\se{0.038}} & \makecell{132.753\\\se{0.779}} \\
 
PageRank Softmax
& \makecell{\textbf{0.346}\\\se{0.149}} & \makecell{\textbf{0.350}\\\se{0.186}} & \makecell{\textbf{0.390}\\\se{0.137}}
& \makecell{0.473\\\se{0.212}} & \makecell{\textbf{0.498}\\\se{0.233}}
& \makecell{\textbf{-40.280}\\\se{0.628}} & \makecell{-38.664\\\se{0.198}}
& \makecell{\textbf{0.684}\\\se{0.034}} & \makecell{0.423\\\se{0.023}} & \makecell{0.535\\\se{0.047}} & \makecell{0.399\\\se{0.042}} & \makecell{132.460\\\se{0.778}} \\
 
\bottomrule
\end{tabular}
}
\end{table}

\section{Conclusion}
In this work, we explored the problem of reward design for RUL, motivated by the observation that existing methods rely on sparse binary rewards that provide limited learning signal during training. We proposed a generic reward framework that separates the question of verifiability from that of reward density, and instantiated it with two principled reward functions: an exponential reward operating on violation counts, and a PageRank reward that further exploits the semantic structure among forbidden concepts. Theoretically, we characterized the convergence speed ordering across reward types in terms of advantage variance under GRPO, showing that denser rewards maintain more stable gradients throughout training. Empirically, on the RWKU benchmark, both proposed rewards achieve substantially faster forgetting than the binary baseline, up to $3\times$ for PageRank, without degrading neighboring knowledge or general model utility. Together, our results establish reward design as a principled and impactful dimension of the RUL problem, and open directions for future work on adaptive rewards that remain verifiable without access to the original training data.

% ---- Bibliography ----
%
% BibTeX users should specify bibliography style 'splncs04'.
% References will then be sorted and formatted in the correct style.
%
\bibliographystyle{splncs04}
\bibliography{references}
%% Note that this preceding line implies that you store your BibTeX references in a file called 'mybibliography.bib'. If you instead store your references in a file with a different name, for instance 'references.bib', the preceding line should read '\bibliography{references}'. Whatever you do, DO NOT put the file name extension .bib inside the \bibliography command; this will trip up LaTeX compilers. 
%

\begin{credits}
\section*{\ackname}
This work was supported by the IT Foundation Esslingen and the Munich Center for Machine Learning (MCML). We acknowledge the EuroHPC Joint Undertaking for awarding this project access to the EuroHPC supercomputer LEONARDO, hosted by CINECA (Italy) and the LEONARDO consortium through a EuroHPC Development Access call (project EHPC-DEV-2025D06-096).
\end{credits}

\appendix

\section{Why These Three Designs?}
The choice of binary, exponential, and PageRank rewards is motivated by three fundamental principles.

\noindent\textbf{Minimality and Baseline.} The binary reward represents the minimal verifiable reward: ``does the completion contain any forbidden item?'' This makes it the natural baseline. It is also the instantiation implicitly used in PURGE \cite{zaradoukas2026reinforcement}, establishing a clear comparison point. Any improvement beyond binary represents a genuine gain from richer information extraction.

\noindent\textbf{Smooth Relaxation along Information Hierarchy.} The exponential reward sits directly between binary and PageRank on the information hierarchy. It uses count-level information ($I_{\text{count}}$) rather than binary presence, but makes no structural assumptions about the forget set. This makes it a natural intermediate step that isolates the effect of moving from discrete to continuous reward signals.

\noindent\textbf{Exploitation of Natural Structure.} The PageRank reward exploits the hierarchical structure that naturally exists in extracted forget sets. PURGE and related work extract entities via NER, producing a semantic hierarchy: the queried entity is primary, and associated concepts (attributes, related items) are secondary. PageRank formalizes this intuition by computing importance weights directly from embeddings and semantic similarity. It represents the richest information component that remains verifiable without access to training data.

\begin{remark}
Extensions beyond PageRank (e.g., learned information components, adaptive weights) would require additional modeling assumptions or data beyond what is verifiable. Thus, PageRank represents a natural stopping point for the verifiable reward hierarchy.
\end{remark}

\section{Convergence Analysis: Why Denser Rewards are Faster}
\label{app:convergence_analysis}
To make precise the claim that PageRank $\succ$ Exponential $\succ$ Binary in convergence speed, we analyze how reward richness affects the learning signal.
\subsection{Advantage Variance under GRPO}

Recall that GRPO computes token-level advantages as:
\begin{equation}
\hat{A}_\phi(o) = \frac{\phi(o) - \mu}{\sigma}
\end{equation}
where $\mu = \mathbb{E}_{o \sim \text{batch}}[\phi(o)]$ and $\sigma = \text{std}_{o \sim \text{batch}}[\phi(o)]$ are sample statistics from a batch of completions $\{o_1, \ldots, o_G\}$.

\begin{lemma}[Reward Range and Advantage Variance]
For any reward function $\phi: \mathcal{Y} \to [0,1]$, the variance of advantage estimates satisfies:
\begin{equation}
\text{Var}[\hat{A}_\phi] \leq \frac{\text{Var}[\phi]}{\sigma^2}.
\end{equation}%
The denominator $\sigma^2$ depends on the reward distribution. If rewards cluster at the boundary (many zeros, few ones), $\sigma \to 0$, and advantages become unstable. If rewards are spread across $[0,1]$, $\sigma$ remains bounded away from zero, keeping advantages stable.
\end{lemma}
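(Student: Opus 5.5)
The plan is to treat $\sigma$ as a fixed normalizing constant, conditional on the group statistics, and reduce the inequality to the scaling rule for variance. The lemma is stated loosely, since $\mu$ and $\sigma$ are themselves sample statistics. I would therefore make the conditioning explicit: fix a query $q$ and condition on the group-level statistics $(\mu,\sigma)$ with $\sigma>0$. The degenerate case $\sigma=0$ is the one in which GRPO's advantage is undefined, and in practice it is set to zero by convention (an $\varepsilon$ is added to the denominator). Under this conditioning, $\hat{A}_\phi(o)=(\phi(o)-\mu)/\sigma$ is an affine function of $\phi(o)$.

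The first step is the affine identity $\mathrm{Var}[a\phi+b]=a^2\mathrm{Var}[\phi]$ with $a=1/\sigma$ and $b=-\mu/\sigma$. This gives equality
\[
\mathrm{Var}\bigl[\hat{A}_\phi \,\big|\, \mu,\sigma\bigr]=\frac{\mathrm{Var}[\phi\,|\,\mu,\sigma]}{\sigma^2},
\]
which in particular yields the stated inequality.

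The second step handles the unconditional version, in which $\mu$ and $\sigma$ are computed from the same group. Here I would use the law of total variance: $\mathrm{Var}[\hat{A}_\phi]=\mathbb{E}[\mathrm{Var}(\hat{A}_\phi\mid\mu,\sigma)]+\mathrm{Var}(\mathbb{E}[\hat{A}_\phi\mid\mu,\sigma])$. The empirical advantages in a group have mean exactly $0$, so the second term vanishes. Within-group, the empirical variance of the $\hat{A}_\phi(o_i)$ is exactly $1$ when $\sigma$ is the population standard deviation of the group, or $(G-1)/G\le 1$ otherwise. This makes the bound tight and interpretable as $\mathrm{Var}[\phi]/\sigma^2$ with $\mathrm{Var}[\phi]$ read as the group variance.

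The third step gives the qualitative half of the statement, which follows from Popoviciu's inequality. For $\phi\in[0,1]$ with mean $\mu$, one has $\sigma^2\le\mu(1-\mu)$, with equality iff $\phi$ is $\{0,1\}$-valued.
\begin{itemize}
\item \emph{Binary reward.} Here $\sigma^2=\hat p(1-\hat p)$, where $\hat p$ is the fraction of successes. This equals $0$ with probability $p^G+(1-p)^G$, and is at most of order $1/G$ whenever exactly one sample differs. So $1/\sigma$ is large or undefined when $p$ is near $0$ or $1$.
\item \emph{Graded rewards.} For $\phi_{\exp}$ and the PageRank reward, $\sigma$ vanishes only if all $G$ completions have identical counts or weighted counts. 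This event has smaller probability whenever the count distribution is non-degenerate.
\end{itemize}

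The main obstacle is that the claim ``$\sigma$ bounded away from zero'' is not literally true in general. A dense reward can also concentrate. I would therefore state this part as a comparison of probabilities, $\Pr[\sigma_{\exp}=0]\le\Pr[\sigma_{\mathrm{bin}}=0]$. This holds because $\phi_{\mathrm{bin}}$ is a function of $\phi_{\exp}$: the binary reward is the indicator that $\phi_{\exp}=1$, so equality of all $\phi_{\exp}$ values implies equality of all $\phi_{\mathrm{bin}}$ values. The same argument applies to PageRank versus binary, since its weights satisfy $w_j>0$. This uses exactly the information hierarchy $I_{\min}\prec I_{\text{count}}$ from the earlier definition.
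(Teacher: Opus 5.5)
The paper gives no proof of this lemma. The only argument implicit in it is your first step: the affine-scaling identity $\mathrm{Var}[(\phi-\mu)/\sigma]=\mathrm{Var}[\phi]/\sigma^2$, with $\mu,\sigma$ treated as constants. The sentences about $\sigma\to0$ are commentary, not derived. Your second and third steps go beyond the paper and buy two things. First, the total-variance computation shows that on non-degenerate groups the normalized advantages have variance exactly $1$ (or $(G-1)/G$). So under your conditional reading the inequality is a tautology. Under the literal reading (policy-level $\mathrm{Var}[\phi]$, group-level $\sigma$) it can fail on groups whose sample variance is large relative to $\mathrm{Var}[\phi]$. Either way, the ``instability'' narrative cannot be extracted from it. Second, the inclusion $\{\sigma_{\exp}=0\}\subseteq\{\sigma_{\mathrm{bin}}=0\}$ follows from $\phi_{\mathrm{bin}}=\mathbf{1}[\phi_{\exp}=1]$ on a common sampled group. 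It is a correct, rigorous substitute for the paper's ``$\sigma$ remains bounded away from zero,'' which is unjustified and, as you say, false: for large $\tau$, $\phi_{\exp}$ is nearly constant on every group.

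Three points need repair.

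\textbf{The cited inequality.} $\sigma^2\le\mu(1-\mu)$ is Bhatia--Davis, not Popoviciu, and it does not deliver the qualitative half. It holds for every $[0,1]$-valued reward, so it equally forces $\sigma_{\exp}\to0$ once most completions are violation-free. At fixed mean it says $\{0,1\}$-valued rewards are the \emph{most} spread. Your comparison rests entirely on the coupling argument, so drop the attribution.

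\textbf{The assumption $w_j>0$.} This is a hypothesis, not a property of the paper's weights. It holds for the softmax variant. But PageRank-Linear gives the lowest-ranked item weight $0$, and max-normalized personalized PageRank gives $0$ to items unreachable from $f_1$ in the thresholded $k$NN graph. A completion containing only such items has $\phi_{\mathrm{pr}}=1$ but $\phi_{\mathrm{bin}}=0$, so the inclusion can fail. Separately, because of the clip, $\sigma_{\mathrm{pr}}=0$ does not require identical weighted sums. Every completion with $P\ge1$ gets reward $0$, which under $w_1=1$ happens whenever the target itself is named. This is harmless for the inclusion, but it contradicts your ``only if.''

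\textbf{Strictness.} ``Smaller probability'' in the strict sense requires positive probability of groups whose counts are all positive but not all equal. If counts are supported on $\{0,1\}$, then $\phi_{\exp}=e^{-1/\tau}+(1-e^{-1/\tau})\phi_{\mathrm{bin}}$. The GRPO advantages as defined in the paper are invariant to positive affine maps, so they coincide with the binary ones.
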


\subsection{Analysis of Binary Reward}

For the binary reward, the support is $\{0, 1\}$. Let $p$ be the probability that a sampled completion has zero forbidden items:
\begin{equation}
p = \mathbb{P}\left[\sum_j m_j(y) = 0\right]
\end{equation}%
The empirical reward distribution from a batch of size $G$ is approximately:
\begin{equation}
\phi(o) \in \{0, 1\}, \quad \mathbb{E}[\phi] \approx p, \quad \text{Var}[\phi] \approx p(1-p)
\end{equation}%
Therefore:
\begin{equation}
\sigma_{\text{bin}} = \sqrt{p(1-p)}
\end{equation}

\begin{claim}[Advantage Instability at Boundaries]
When $p \to 0$ or $p \to 1$, we have $\sigma_{\text{bin}} \to 0$. This causes advantage estimates to collapse, losing the learning signal precisely when guidance is needed (early training) or when performance is saturating.
\end{claim}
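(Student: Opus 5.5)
The claim has two parts: a quantitative statement, that $\sigma_{\text{bin}}\to 0$ at the boundaries, and a qualitative one, that the learning signal collapses. I will prove the first directly from the formula $\sigma_{\text{bin}}=\sqrt{p(1-p)}$ derived just above. For the second, I will study the GRPO group statistics at finite group size $G$, since "collapse" only has a precise meaning there.

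\textbf{Step 1 (limit).} The map $p\mapsto p(1-p)$ is continuous on $[0,1]$ and vanishes at both endpoints. The square root is continuous at $0$. Hence $\sigma_{\text{bin}}=\sqrt{p(1-p)}\to 0$ as $p\to 0$ or $p\to 1$. Quantitatively, $\sigma_{\text{bin}}\le\sqrt{\min(p,1-p)}$.

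\textbf{Step 2 (collapse of the group signal).} For a group $\{o_i\}_{i=1}^G$ drawn i.i.d., each binary reward is Bernoulli$(p)$. The group is degenerate, meaning all rewards are equal, with probability
\[
p^G+(1-p)^G .
\]
This tends to $1$ as $p\to 0$ or $p\to 1$. On that event the numerator $\phi(o_i)-\mathrm{mean}$ is identically zero. The normalized advantage $\hat A_\phi(o_i)$ is then $0$, or an undefined $0/0$ regularized to $0$ by the usual $\epsilon$ in the denominator. Either way the clipped surrogate in \eqref{eq:grpo-obj-final} contributes no policy-gradient term, and only the KL term remains. The expected number of groups carrying any signal therefore scales as $1-p^G-(1-p)^G\approx G\min(p,1-p)$. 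This is the formal sense in which the signal is lost.

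\textbf{Step 3 (instability on the non-degenerate event).} When exactly one of the $G$ samples differs from the rest, which is the dominant non-degenerate case near the boundary, its advantage is
\[
\pm\sqrt{G-1}
\]
and the others receive $\mp 1/\sqrt{G-1}$. The rare informative update is thus concentrated on a single sample. The resulting gradient estimator has variance of order $1/(G\min(p,1-p))$ relative to its mean, which I will connect to the Lemma (Reward Range and Advantage Variance) as $\sigma\to 0$.

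\textbf{Step 4 (interpretation).} At initialization the model still recalls the concept, so $p\approx 0$ (early training). Near convergence $p\approx 1$ (saturation). These are exactly the two regimes named in the claim.

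The main obstacle is Step 2/3. Read literally, $\hat A$ is scale-invariant: each nondegenerate group has advantages of unit empirical variance. So "$\sigma\to 0$ makes advantages collapse" cannot be shown pointwise. I would make it rigorous by framing the collapse as the vanishing probability of a nonzero gradient together with the heavy-tailed single-sample advantages. I would not try to claim that the advantages themselves blow up in every group.
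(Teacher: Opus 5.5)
Your proposal is correct. Step 1 is exactly the paper's argument: the paper derives $\sigma_{\text{bin}}=\sqrt{p(1-p)}$ from the Bernoulli variance and reads off the limit. You diverge on the second sentence of the claim. The paper never analyzes a finite group. It supports ``collapse'' only in two ways: through the Lemma $\mathrm{Var}[\hat A_\phi]\le\mathrm{Var}[\phi]/\sigma^2$, which, as you note, says little because group-normalized advantages have unit empirical variance whenever $\sigma>0$; and through the convergence proof sketch's assertion $\mathbb{E}[|\hat A_{\text{bin}}|]\approx c\sqrt{p(1-p)}$. Your degenerate-group argument ($p^G+(1-p)^G\to 1$, with $\hat A\equiv 0$ on that event) is a genuinely rigorous form of ``losing the signal''. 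Two small steps would close it. First, binary group advantages satisfy $|\hat A|\le\sqrt{G-1}$, so with bounded score terms the expected surrogate gradient is at most a constant times $1-p^G-(1-p)^G$, hence vanishes; this is a probability per group, not an ``expected number of groups''. Second, the paper's $p$ is marginal over queries, so run the argument conditionally on $q$ and use $(1-p_q)^G\ge 1-Gp_q$.

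The paper's population-level view buys something complementary that also answers your scale-invariance worry. For $\hat A=(\phi-p)/\sqrt{p(1-p)}$ one has exactly $\mathbb{E}|\hat A|=2\sqrt{p(1-p)}\to 0$ while $\mathbb{E}[\hat A^2]=1$. That is collapse in $L^1$, with the mass carried by rare outliers of size $\sqrt{(1-p)/p}$, which equals your $\sqrt{G-1}$ at $p=1/G$. It holds even when $G$ is large and degenerate groups are rare. Your finite-$G$ argument instead gives a signal probability of order $G\min(p,1-p)$, linear rather than square-root in $\min(p,1-p)$. The two are the $G\min(p,1-p)\ll 1$ and $\gg 1$ regimes of one phenomenon, and they match at $p\approx 1/G$. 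So your argument alone does not reproduce the $\sqrt{p(1-p)}$ factor the paper feeds into $T_{\text{bin}}$. Finally, the relative-variance order in Step 3 remains heuristic, and $\sqrt{G-1}$ becomes $(G-1)/\sqrt{G}$ under a Bessel-corrected std.
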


\subsection{Analysis of Exponential Reward}

For the exponential reward, the reward takes values in $(0, 1]$ continuously. Let $C = c(y; F)$ follow some empirical distribution with mean $\mu_C$ and variance $\sigma_C^2$. By Taylor expansion around $\mu_C$:
\begin{equation}
\text{Var}[\phi_{\text{exp}}] \approx \frac{1}{\tau^2} \cdot \text{Var}[C] = \frac{\sigma_C^2}{\tau^2}
\end{equation}%
The empirical standard deviation of rewards is:
\begin{equation}
\sigma_{\text{exp}} \approx \frac{\sigma_C}{\tau} \cdot \phi_{\text{exp}}(\mu_C)
\end{equation}

\begin{claim}[Maintained Variance Across Training]
Unlike the binary case, $\sigma_{\text{exp}}$ depends on the distribution of violation counts, not on a boundary probability. Even when most completions have zero violations ($\mu_C \approx 0$), the exponential reward maintains $\phi_{\text{exp}} \approx 1$ with gradient $-1/\tau$. Completions with $c=1$ violation yield $\phi_{\text{exp}} \approx e^{-1/\tau} < 1$, preserving advantage signals. The variance $\text{Var}[\phi_{\text{exp}}]$ remains nonzero as long as completion quality varies.
\end{claim}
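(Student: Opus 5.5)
This claim is qualitative, so I will prove it by making its three parts precise and checking each. I work with the exact reward $\phi_{\exp}(y)=e^{-C(y)/\tau}$, where $C(y)=\sum_j m_j(y)\in\mathbb{N}_0$. I will not rely on the first-order Taylor approximation displayed before the claim.

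\textbf{Step 1: values and local slope.} Since $\phi_{\exp}$ depends on $y$ only through $C$, a sampled group $\{o_i\}$ induces a distribution over counts.
\begin{itemize}
\item If $C=0$, then $\phi_{\exp}=1$ exactly.
\item If $C=1$, then $\phi_{\exp}=e^{-1/\tau}<1$, because $\tau>0$.
\item Near $C=0$, the map $c\mapsto e^{-c/\tau}$ has slope $-\tfrac{1}{\tau}e^{-c/\tau}$, which tends to $-1/\tau$. This gives the claimed ``gradient $-1/\tau$''.
\end{itemize}
More generally, $c\mapsto e^{-c/\tau}$ is strictly decreasing and hence injective on $\mathbb{N}_0$. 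So distinct counts always receive distinct rewards.

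\textbf{Step 2: variance is nonzero exactly when counts vary.} For a group of $G$ completions, the empirical variance $\frac1G\sum_i(\phi(o_i)-\bar\phi)^2$ is zero if and only if all $\phi(o_i)$ are equal. By injectivity, this happens if and only if all $C(o_i)$ are equal. The same argument at the population level gives $\mathrm{Var}[\phi_{\exp}]>0$ if and only if $C$ is not almost surely constant. This is the precise meaning of ``as long as completion quality varies''.

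\textbf{Step 3: contrast with the binary case.} Let $\Delta=\min_{c\neq c'}|e^{-c/\tau}-e^{-c'/\tau}|$, taken over the finitely many counts in the support. Then every pair of completions with different counts contributes at least $\Delta^2$ to the spread of rewards. For example, a group in which every completion has at least one violation but the counts differ has $\phi_{\text{bin}}\equiv 0$, so $\sigma_{\text{bin}}=0$, while $\sigma_{\exp}>0$. Hence $\sigma_{\exp}$ is controlled by the distribution of $C$, not by the single boundary probability $p=\mathbb{P}[C=0]$.

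For a quantitative version, write $\sigma_{\exp}^2=\mathrm{Var}[g(C)]$ with $g(c)=e^{-c/\tau}$. Because $g$ is $\tfrac1\tau$-Lipschitz, $\mathrm{Var}[g(C)]\le \sigma_C^2/\tau^2$. Because $g$ is convex and decreasing, $\mathrm{Var}[g(C)]\ge g'(c_{\max})^2\,\sigma_C^2$, where $c_{\max}$ is the largest count in the support.

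\textbf{Main obstacle.} The claim is stated in the regime $\mu_C\approx 0$, where most completions are violation-free. There the variance is small: if $\mathbb{P}[C>0]=q$, then $\mathrm{Var}$ is of order $q(1-e^{-1/\tau})^2$. That is the same order as the binary variance $q(1-q)$, only rescaled. So ``maintained'' can only be asserted as \emph{nonzero whenever counts differ}, not as uniformly bounded away from zero. I would state the claim in that form, and only note that the Taylor-based formula for $\sigma_{\exp}$ is heuristic.
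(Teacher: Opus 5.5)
Your Step 1 matches the pointwise observations that the claim itself makes. For the variance statement, however, you take a different route from the paper.

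\textbf{The paper's route.} The paper supports the variance statement only through a first-order Taylor (delta-method) expansion of $e^{-C/\tau}$ around $\mu_C$. This gives $\mathrm{Var}[\phi_{\text{exp}}]\approx\sigma_C^2/\tau^2$ and $\sigma_{\text{exp}}\approx(\sigma_C/\tau)\,\phi_{\text{exp}}(\mu_C)$, and the claim is read off these formulas.

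\textbf{Your route.} You use exact facts instead. Injectivity of $c\mapsto e^{-c/\tau}$ on $\mathbb{N}_0$ shows that the empirical or population variance vanishes if and only if the counts are constant. The Lipschitz and convexity arguments then give the rigorous bracket $e^{-2c_{\max}/\tau}\sigma_C^2/\tau^2\le\mathrm{Var}[\phi_{\text{exp}}]\le\sigma_C^2/\tau^2$.

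\textbf{What each buys.} Your version is strictly more informative, for three reasons.
\begin{itemize}
\item The paper's first display is exactly your upper bound. It drops the factor $e^{-2\mu_C/\tau}$ that its second display keeps.
\item The delta-method value $e^{-2\mu_C/\tau}\sigma_C^2/\tau^2$ lies inside your bracket, because $0\le\mu_C\le c_{\max}$.
\item You never need the linearization to be accurate, and here it is not. Counts are integers, and at the default $\tau=0.5$ the slope $-1/\tau$ would predict $\phi_{\text{exp}}(1)\approx-1$ instead of $e^{-2}$.
\end{itemize}
The paper's route only buys a single closed-form scaling $\sigma_C/\tau$, which it reuses in its convergence sketch.

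\textbf{Sharpening your caveat.} Your caveat about the $\mu_C\approx 0$ regime is correct, and it can be made exact. Suppose every completion in a group has $C\in\{0,1\}$. Then on that group $\phi_{\text{exp}}=e^{-1/\tau}+(1-e^{-1/\tau})\,\phi_{\text{bin}}$. Since $\hat{A}_\phi$ is invariant under positive affine transformations of the reward, the GRPO advantages under the two rewards coincide exactly, not merely up to order. So the exponential reward adds signal beyond binary only when a group contains two distinct positive counts. That is precisely the case your injectivity statement isolates.
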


\subsection{Analysis of PageRank Reward}

The PageRank reward takes continuous values in $[0, 1]$. Let $P(y; F) = \sum_{j=1}^m w_j \cdot \mathbf{1}[m_j(y) > 0]$ denote the total weighted penalty.
\begin{equation}
\phi_{\text{pr}} = \text{clip}(1 - P, 0, 1)
\end{equation}%
The key difference from exponential is that PageRank weights penalties by semantic importance. Violations of high-importance items contribute more to $P$ than violations of peripheral items.

\begin{claim}[Finer-Grained Advantage Signals]
For two completions with the same total violation count $c(y_1) = c(y_2) = 2$:
\begin{itemize}
    \item If $y_1$ violates two core items: $P(y_1) = 2w_{\text{core}} \approx 2 \cdot 0.9 = 1.8$, $\phi_{\text{pr}}(y_1) \approx 0$;
    \item If $y_2$ violates two peripheral items: $P(y_2) = 2w_{\text{periph}} \approx 2 \cdot 0.1 = 0.2$, $\phi_{\text{pr}}(y_2) \approx 0.8$;
\end{itemize}%
Exponential would assign both equal low reward (approximately $e^{-2/\tau} \approx 0.14$ for moderate $\tau$), failing to distinguish progress. PageRank clearly differentiates, providing stronger learning signals about which violations matter most.
\end{claim}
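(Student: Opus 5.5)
The plan is to prove the claim by direct computation from the definitions of $\phi_{\text{exp}}$ and $\phi_{\text{pr}}$. Then I would upgrade the numerical example to a structural statement about GRPO advantages. First I fix the interpretation. ``Violates two core items'' means the completion $y_1$ contains two distinct forget items $f_a,f_b$, each exactly once, with PageRank weights $w_a,w_b\approx 0.9$. Likewise $y_2$ contains two distinct peripheral items $f_c,f_d$, each once, with $w_c,w_d\approx 0.1$. Under this reading both completions have $c(y)=\sum_j m_j(y)=2$, and the indicator sum in $P$ equals the count. (If an item repeated, $P$ would count it once while $c$ counts it twice, so this choice matters.)

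The two computations are then routine. For PageRank, $P(y_1)=w_a+w_b\approx 1.8$, and clipping gives $\phi_{\text{pr}}(y_1)=\mathrm{clip}(1-1.8,0,1)=0$. For the peripheral completion, $P(y_2)\approx 0.2$, so $\phi_{\text{pr}}(y_2)\approx 0.8$. For the exponential reward, $\phi_{\text{exp}}$ depends on $y$ only through $I_{\text{count}}$, so $\phi_{\text{exp}}(y_1)=\phi_{\text{exp}}(y_2)=e^{-2/\tau}$ \emph{exactly}, for every $\tau>0$. This equality is the substantive half of the claim and does not depend on ``moderate'' $\tau$. The quoted value $0.14$ corresponds to $\tau\approx 1$. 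At the ablation optimum $\tau=0.5$ it would instead be $e^{-4}\approx 0.018$, which only strengthens ``low''.

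Next I translate this into advantages, to justify ``failing to distinguish progress''. Place $y_1,y_2$ in the same GRPO group. Then $\hat A_{\phi}(y_2)-\hat A_{\phi}(y_1)=(\phi(y_2)-\phi(y_1))/\sigma$. This difference is identically $0$ for $\phi_{\text{exp}}$ and approximately $0.8/\sigma>0$ for $\phi_{\text{pr}}$. So the policy gradient pushes $y_2$ up relative to $y_1$ only under PageRank. More generally, $\phi_{\text{exp}}$ is constant on the level sets $\{c(y)=k\}$. In contrast, $\phi_{\text{pr}}$ separates two completions in the same level set whenever the sums of their violated weights differ and at least one sum is below $1$. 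By the information hierarchy, $I_{\text{count}}\prec I_{\text{structured}}$, so this is a strict refinement whenever the weights are non-uniform.

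The main obstacle is justifying the weight values $0.9$ and $0.1$. They are not derived from the personalized PageRank equation; only $w_1=1$ under max-normalization is guaranteed. I would therefore state the result parametrically: if $w_a+w_b\ge 1$ and $w_c+w_d=\delta<1$, then $\phi_{\text{pr}}(y_1)=0$ and $\phi_{\text{pr}}(y_2)=1-\delta$. I would then remark that skewed, power-law-like weights, as observed in the variants ablation, make such configurations typical rather than guaranteeing them.
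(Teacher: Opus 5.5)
Your proposal is correct and is essentially the paper's own argument: the paper gives no separate proof, only the inline arithmetic $P(y_1)\approx 1.8\Rightarrow\phi_{\text{pr}}(y_1)=0$, $P(y_2)\approx 0.2\Rightarrow\phi_{\text{pr}}(y_2)\approx 0.8$, and $\phi_{\text{exp}}(y_1)=\phi_{\text{exp}}(y_2)=e^{-2/\tau}$; your within-group advantage difference and your parametric version with $w_a+w_b\ge 1$, $w_c+w_d=\delta<1$ are sound additions. Drop one aside, though: $I_{\text{count}}\prec I_{\text{structured}}$ does not make $\phi_{\text{pr}}$ a refinement of $\phi_{\text{exp}}$ (or of the count partition), because $\Gamma$ sees only $\mathbf{1}[m_j(y)>0]$ and clips, so $\phi_{\text{pr}}$ also merges completions that $\phi_{\text{exp}}$ separates, e.g.\ one versus three mentions of the same item, or any two completions with $P\ge 1$.
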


\subsection{Convergence Rate Comparison}

\begin{theorem}[Convergence Speed Ordering]
Under reasonable assumptions on the violation distribution, the convergence rates satisfy:
\begin{equation}
\text{rate}_{\text{bin}} < \text{rate}_{\text{exp}} < \text{rate}_{\text{pr}}
\end{equation}%
More formally, the number of training iterations $T$ required to reach a target unlearning performance $\epsilon$ scales as:
\begin{align}
T_{\text{bin}} &\propto \frac{1}{p(1-p)} \cdot d(\epsilon) \\
T_{\text{exp}} &\propto \tau^2 \cdot d(\epsilon) \\
T_{\text{pr}} &\propto d(\epsilon)
\end{align}%
where $d(\epsilon)$ is a problem-dependent term related to forgetting difficulty (forget set size, model capacity), and $p$ is the success probability under binary rewards.
\end{theorem}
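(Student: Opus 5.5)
The plan is to make the ``reasonable assumptions'' explicit and reduce convergence speed to the quality of the per-step gradient signal under GRPO. We adopt a standard stochastic-ascent model. The unlearning objective $J(\theta)=\mathbb{E}[\phi(o)]$ is $L$-smooth. The effective per-step improvement is governed by the signal-to-noise ratio of the advantage-weighted score-function gradient $g=\mathbb{E}_{i}[\hat A_\phi(o_i)\nabla\log\pi_\theta(o_i\mid q)]$. Under a bounded score-function assumption, $\|\nabla\log\pi_\theta\|\le B$, the expected ascent per step is proportional to $\mathrm{Var}[\phi]$ divided by the normalizing scale, while the noise is controlled by the Lemma (Reward Range and Advantage Variance). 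We then take the iteration count to reach a target $\epsilon$ to be $T\propto d(\epsilon)/\mathrm{SNR}(\phi)$. Here $d(\epsilon)$ absorbs $L$, $B$, the forget-set size and the initial gap, following the usual $O(\Delta L/\text{signal}^2)$ bound for nonconvex stochastic gradient ascent.

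The key steps, in order, are as follows.
\begin{enumerate}
\item Apply the Lemma (Reward Range and Advantage Variance) to bound $\mathrm{Var}[\hat A_\phi]$ in terms of $\mathrm{Var}[\phi]/\sigma^2$, and show that the informative part of the signal scales with the reward spread $\sigma_\phi^2$.
\item For the binary reward, substitute $\sigma_{\text{bin}}^2=p(1-p)$ from the binary analysis and invoke the Claim (Advantage Instability at Boundaries). This gives $T_{\text{bin}}\propto d(\epsilon)/(p(1-p))$.
\item For the exponential reward, use the delta-method estimate $\sigma_{\exp}\approx(\sigma_C/\tau)\phi_{\exp}(\mu_C)$ from the Claim (Maintained Variance Across Training). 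Assuming the violation counts have variance $\sigma_C^2$ bounded below by a constant, and $\mu_C$ bounded so that $\phi_{\exp}(\mu_C)$ is order one, the signal scales as $1/\tau^2$, hence $T_{\exp}\propto\tau^2 d(\epsilon)$.
\item For PageRank, use the Claim (Finer-Grained Advantage Signals). The weighted penalty $P$ separates completions with equal counts, so $\mathrm{Var}[\phi_{\text{pr}}]\ge\mathrm{Var}[\mathbb{E}[\phi_{\text{pr}}\mid C]]$ plus a strictly positive within-count term $\mathbb{E}[\mathrm{Var}(\phi_{\text{pr}}\mid C)]$. This is bounded below by a constant depending only on the weight dispersion of $w$, giving $T_{\text{pr}}\propto d(\epsilon)$.
\item Obtain the ordering by comparing constants. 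This requires assuming $p(1-p)<1/\tau^2$ in the relevant regime, which holds early in training when $p\to0$, and $\tau^2>c_{\text{pr}}$, i.e.\ the PageRank within-count variance dominates.
\end{enumerate}

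The main obstacle is step 4 together with the comparison in step 5. The rates are only proportionalities with different hidden constants. In particular, $\tau=0.5$ gives $\tau^2<1$, which formally would make the exponential faster than PageRank. So the ordering cannot hold unconditionally, and ``reasonable assumptions'' must be spelled out as explicit inequalities between $p(1-p)$, $\sigma_C^2/\tau^2$, and the PageRank weight variance. The first approach I would try is a law-of-total-variance decomposition. It shows that $\phi_{\text{pr}}$, being a refinement of the count information in the Information Hierarchy ($I_{\text{count}}\prec I_{\text{structured}}$), can be arranged to have variance at least that of any function of the count alone with matching range. The difficulty is that variance is not monotone under refinement for arbitrary transformations $\Gamma$. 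I would therefore restrict to the stated clipped-linear $\Gamma$, and to weight normalizations for which the count-conditional mean matches the exponential reward to first order.
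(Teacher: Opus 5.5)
Your route is essentially the paper's: its appendix sketch also equates the learning signal with the spread of the reward ($\mathbb{E}[|\hat{A}_{\text{bin}}|]\propto\sqrt{p(1-p)}$, $\mathbb{E}[|\hat{A}_{\text{exp}}|]\propto\sigma_C/\tau$, $\mathbb{E}[|\hat{A}_{\text{pr}}|]\propto\sigma_P$), then invokes standard optimization theory for $T\propto1/\text{signal}^2$. You make the assumptions explicit, and you rightly observe that with $\tau=0.5$ the stated constants would put the exponential reward ahead of PageRank, which the sketch simply asserts away.

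However, the step that turns reward spread into the factor $\tau^2$ fails, in your version and in the paper's, because it ignores the group normalization inside $\hat{A}_\phi$. The GRPO advantage is invariant under $\phi\mapsto a\phi+b$ with $a>0$. The delta-method linearization you use in step 3, $\phi_{\text{exp}}\approx\phi_{\text{exp}}(\mu_C)\bigl(1-(C-\mu_C)/\tau\bigr)$, is exactly such an affine function of $C$, so $\hat{A}_{\text{exp}}\approx-(C-\mu_C)/\sigma_C$ with no $\tau$ left. Likewise, with $\sigma$ the standard deviation of $\phi$, the Lemma you start from reads $\mathrm{Var}[\hat{A}_\phi]\le1$ and carries no reward-specific information. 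The $\sigma_\phi$ in your ``ascent $\propto\mathrm{Var}[\phi]$ over the normalizing scale'' measures progress in units of $J=\mathbb{E}[\phi]$. The slope of $J$ in $C$ carries the same factor $1/\tau$, so on a reward-independent metric such as $\mathbb{E}[C]$ (which is what a common target $\epsilon$ must refer to) it cancels. For the binary reward, $\sqrt{p(1-p)}$ is a shape parameter of a $\{0,1\}$ variable and does survive normalization: at the population level $\mathbb{E}|\hat{A}_{\text{bin}}|=2\sqrt{p(1-p)}$ exactly. For the exponential reward, though, $\tau$ acts only through the curvature of $e^{-C/\tau}$, so no monotone $\tau^2$ law follows. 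The paper's own ablation, where $\tau=0.1$ forgets worse than $\tau=0.5$, runs opposite to $T_{\text{exp}}\propto\tau^2$.

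Two further steps need repair. In step 3, $\sigma_C^2$ cannot stay bounded below in the regime $p\to1$ where you invoke the binary collapse: bounded counts $C\le K$ give $\sigma_C^2\le\mathbb{E}[C^2]\le K^2(1-p)$. More to the point, after normalization the exponential advantage coincides with the binary one in every group except those containing failing completions with at least two distinct counts. Any binary-versus-exponential separation therefore has to be quantified through the frequency of such groups (mostly early in training), not through $\sigma_C/\tau$.

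In step 4, $\phi_{\text{pr}}$ depends on $M(y;F)$ only through the indicators $\mathbf{1}[m_j(y)>0]$. It is therefore not a refinement of $I_{\text{count}}$: the same violated set with different multiplicities gets the same reward. Moreover, $\mathbb{E}[\mathrm{Var}(\phi_{\text{pr}}\mid C)]$ is governed by how the policy spreads violations across items at a given count, not by the dispersion of $w$ alone. It is zero if, at each count, the policy always violates the same items.

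A repairable version would work with scale-invariant quantities: the probability that a group of $G$ samples is non-degenerate (for binary, $1-p^G-(1-p)^G$) and the alignment $\mathbb{E}[\hat{A}_\phi\nabla\log\pi_\theta]$. It would also need an explicit non-degeneracy assumption on which items the policy violates. The $\tau$-dependence stated in the theorem would not come out of such an argument.
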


\begin{proof}[Proof Sketch]
The policy gradient step size is roughly $\eta \cdot \mathbb{E}[|\hat{A}|]$ (ignoring clipping and other details). For gradient magnitude, we have:
\begin{align}
\mathbb{E}[|\hat{A}_{\text{bin}}|] &\approx \text{constant} \cdot \sqrt{p(1-p)} \\
\mathbb{E}[|\hat{A}_{\text{exp}}|] &\approx \text{constant} \cdot \frac{\sigma_C}{\tau} \\
\mathbb{E}[|\hat{A}_{\text{pr}}|] &\approx \text{constant} \cdot \sigma_P
\end{align}%
where $\sigma_C$ is the std of violation counts and $\sigma_P$ is the standard deviation of weighted penalties. Crucially:

\noindent\textbf{(1) For binary}: As training progresses, $p \to 1$ (success rate increases), so $\sqrt{p(1-p)} \to 0$. Gradient collapse occurs near convergence.

\noindent\textbf{(2) For exponential}: The std of counts $\sigma_C$ only goes to zero if all completions converge to identical violation counts. With proper exploration, this remains nonzero. However, the decaying constant $\tau$ introduces a constant slowdown factor.

\noindent\textbf{(3) For PageRank}: The std of weighted penalties $\sigma_P$ benefits from semantic structure. Even with a few total violations, penalties vary semantically, maintaining the signal. No decaying constant tuning parameter needed.

The result follows from standard optimization theory: more stable gradients with lower variance enable larger step sizes and faster convergence.
\end{proof}

\section{Experimental Details}
\label{app:appendix_experiments}

\subsection{The RWKU Benchmark}
\label{app:appendix_rwku}

We evaluate all reward variants using the Real-World Knowledge Unlearning (RWKU) benchmark~\cite{jinrwku}, a comprehensive framework designed to assess unlearning in a practical, zero-shot setting where neither the forget corpus nor the retain corpus is provided. RWKU uses 100 real-world famous people as unlearning targets, selected based on Wikipedia popularity rankings. The benchmark is structured around four evaluation splits, each targeting a distinct aspect of the unlearned model.

\paragraph{Forget Set.}
The forget set measures unlearning efficacy across three probe types:
\begin{itemize}
    \item \textbf{Fill-in-the-Blank (FB):} Cloze-style sentences extracted from the target's Wikipedia page, with key facts masked. Performance is measured via ROUGE-L recall (lower is better $\downarrow$).
    \item \textbf{Question-Answering (QA):} Paraphrased and restructured question--answer pairs requiring active knowledge recall. Evaluated with ROUGE-L recall (lower is better $\downarrow$).
    \item \textbf{Adversarial Attacks (AA):} Nine jailbreak-style probe types designed to elicit residual knowledge from the unlearned model, including prefix injection, affirmative suffix, role playing, multiple choice, reverse query, synonym manipulation, background hint, in-context learning, and cross-lingual queries. Evaluated with ROUGE-L recall (lower is better $\downarrow$).
\end{itemize}

\paragraph{Neighbor Set.}
The neighbor set evaluates whether unlearning is precise and does not inadvertently suppress knowledge adjacent to the forget target (e.g., facts about a target's works or associated entities, rather than the target itself). It uses the same FB and QA probe formats as the forget set, but evaluated with ROUGE-L recall (higher is better $\uparrow$).

\paragraph{MIA Set.}
The Membership Inference Attack (MIA) set serves as a privacy proxy, assessing whether the model still leaks membership signals for forget-target fragments. It consists of two splits:
\begin{itemize}
    \item \textbf{FM (Forget Members):} Textual fragments drawn from the unlearning target.
    \item \textbf{RM (Retain Members):} Unrelated control fragments.
\end{itemize}
We primarily report the LOSS-based MIA score. A well-unlearned model should yield higher LOSS on FM relative to RM, indicating weaker membership signals.

\paragraph{Utility Set.}
The utility set quantifies collateral effects on broader model capabilities, using the following benchmarks:
\begin{itemize}
    \item \textbf{General Ability (GA):} MMLU, 5-shot accuracy.
    \item \textbf{Reasoning Ability (RA):} Big-Bench-Hard (BBH), 3-shot chain-of-thought, exact match.
    \item \textbf{Truthfulness (TRU):} TruthfulQA MC1, 6-shot accuracy.
    \item \textbf{Factuality (FAC):} TriviaQA, 6-shot F1.
    \item \textbf{Fluency (FLU):} AlpacaEval, weighted average of bi- and tri-gram entropy (reported as a sum).
\end{itemize}
All utility metrics are higher-is-better ($\uparrow$). FM and FLU scores are reported as sums over targets, following the RWKU convention.

\paragraph{Dataset Statistics.}
RWKU contains 13{,}131 multi-level forget probes (3{,}268 fill-in-the-blank, 2{,}879 question-answer, 6{,}984 adversarial attack) and 11{,}379 neighbor probes. The utility set samples 171 instances from MMLU, 81 from BBH, 50 from TruthfulQA, 100 from TriviaQA, and 50 from AlpacaEval per unlearning target.

\subsection{Hyperparameter Settings}
\label{app:appendix_hyperparams}

All experiments are conducted using \textbf{Phi-3-Mini-4K-Instruct}, a 3.8B-parameter instruction-tuned language model. We use the default PURGE hyperparameters throughout. The group size for sampling is $G = 8$ completions per query. The PPO clipping threshold is $\varepsilon = 0.2$, and the KL penalty weight is $\beta = 0.001$. We train for a maximum of 1{,}500 steps and report results at convergence. All experiments are executed on a system with a single AMD EPYC 7002/3 64-core CPU and one NVIDIA Tesla H200 GPU. We reproduce the performance of the original model (BASE) and the PURGE binary-reward baseline using the default hyperparameters reported in the original PURGE work~\cite{zaradoukas2026reinforcement}.

\paragraph{Reward-Specific Hyperparameters.}
\begin{itemize}
    \item \textbf{Binary Reward ($\phi_{\text{bin}}$):} No additional hyperparameters; threshold fixed at zero forbidden matches.
    \item \textbf{Exponential Reward ($\phi_{\text{exp}}$):} decaying constant parameter $\tau$, swept over $\{0.1, 0.25, 0.5, 1.0, 2.0, 5.0\}$. Based on ablation results (Section~6.3), we set $\tau^* = 0.5$ as the default.
    \item \textbf{PageRank Reward ($\phi_{\text{pr}}$):} Semantic graph constructed with cosine-similarity threshold $\theta = 0.5$; personalized PageRank damping factor $\alpha = 0.85$. We use the \texttt{PageRank-Softmax} variant as the default, based on the ablation in Section~6.3.
\end{itemize}

\section{Additional PageRank Reward Analysis}
\label{app:pagerank}

This appendix provides a comprehensive analysis of PageRank weight redistribution strategies, complementing the ablation study in Section~\ref{sec:ablations}. We examine how different normalisation schemes affect the weight distribution over the forget set, and present full experimental results across all PageRank variants.

\subsection{Weight Distribution Across Variants}

A fundamental challenge with PageRank applied to semantic forget-set graphs is the emergence of a Zipf/power-law weight distribution: a small number of high-centrality nodes (typically the primary forget target and its closest semantic neighbours) capture the vast majority of the total weight mass, leaving peripheral concepts with negligible penalty contribution. This collapse undermines the reward's ability to provide informative gradients when the model has already suppressed the most salient concepts but still leaks information through secondary entities.

\begin{figure}[t]
    \centering
    \includegraphics[width=\textwidth]{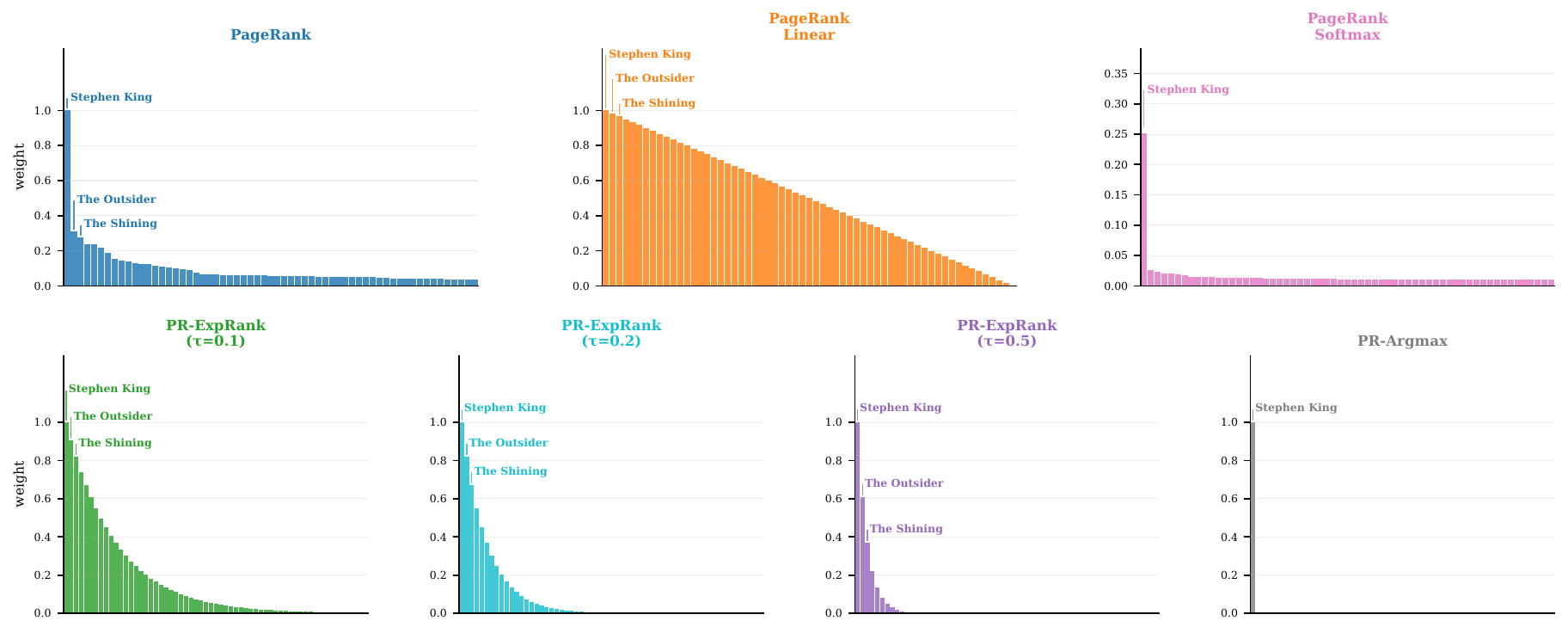}
    \caption{PageRank weight distribution variants for the Stephen King forget set (61 terms). PageRank exhibit a power-law distribution where the top-5 nodes capture 34\% of total weight mass. PageRank-Softmax compresses this gap while preserving semantic ordering; PageRank-Linear corrects the collapse but introduces uniform spacing insensitive to true score gaps. The exprank variants ($\alpha \in \{0.1, 0.2, 0.5\}$) offer a continuous interpolation between these extremes, while PageRank-Argmax concentrates all weight on the single highest-ranked node.}
    \label{fig:weights_pagerank}
\end{figure}

Figure~\ref{fig:weights_pagerank} illustrates this effect for the Stephen King forget set (61 terms). PageRank exhibit power-law profile, with the top-5 nodes accounting for 34\% of the total weight mass. PageRank-Softmax applies a temperature-scaled softmax over the raw scores, converting them into a proper probability distribution; this compresses the gap between high- and low-weight nodes, while retaining the semantic ordering induced by PageRank. PageRank-Linear discards score magnitudes entirely, replacing them with a linearly spaced sequence from 1 (highest-ranked) to 0 (lowest-ranked); although it corrects the power-law collapse, it does so at the cost of introducing a uniform spacing distortion that is insensitive to the true score gaps between adjacent ranks.

\subsection{Full Experimental Results}

\begin{figure}[t]
    \centering
    \includegraphics[width=\textwidth]{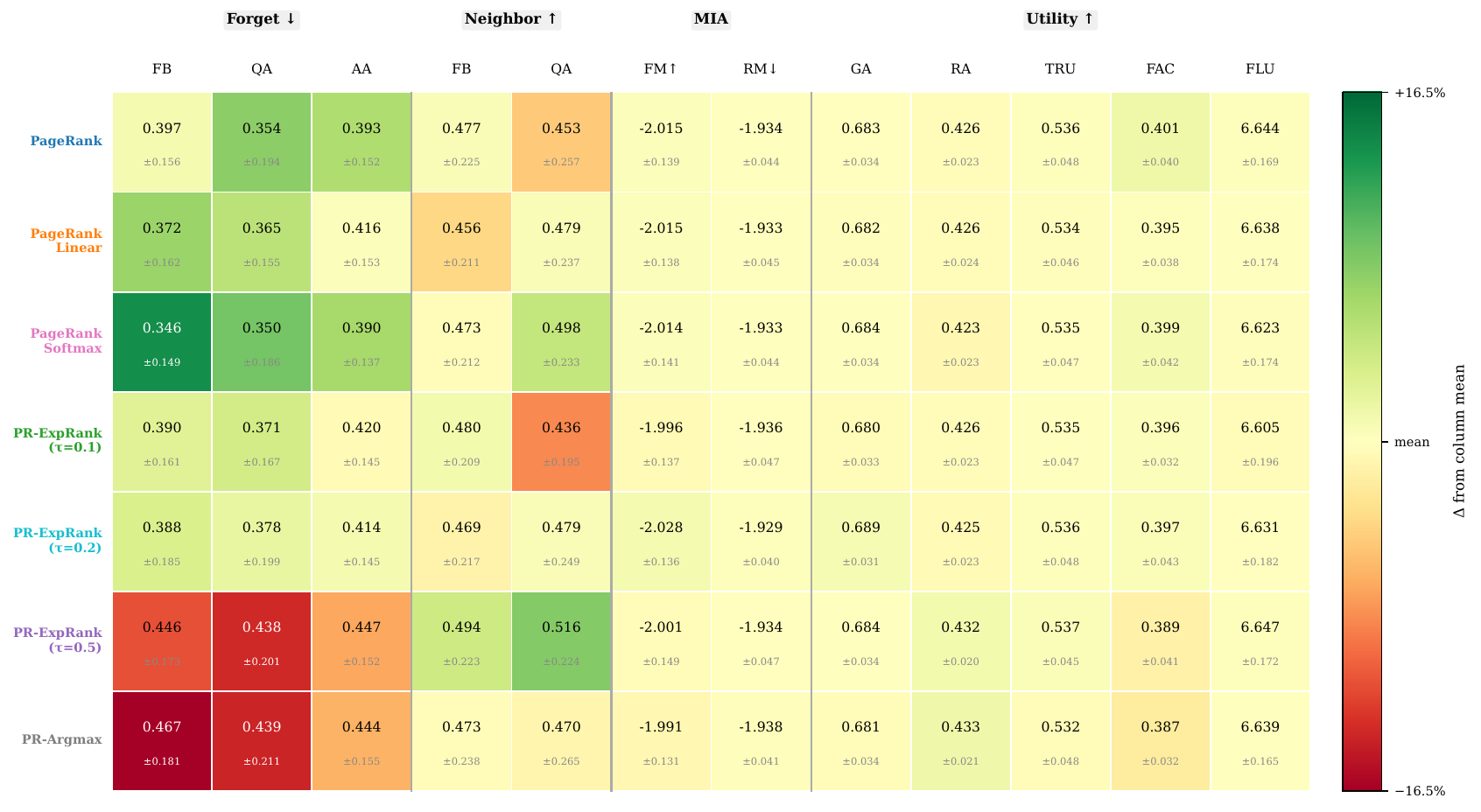}
    \caption{Evaluation of PageRank variants on the RWKU benchmark at 1{,}500 training steps, averaged over 20 forget targets. Columns report per-metric scores (Forget, Neighbor, MIA, Utility) alongside per-group ranks, win counts, overall rank, and a robustness score (inverse mean standard deviation across targets; higher = more consistent). PageRank-Softmax achieves the best overall rank and highest robustness, while PageRank-Linear performs worst. PageRank-Argmax and PageRank-ExpRank variants occupy intermediate positions.}
    \label{fig:extended_pagerank_heatmap}
\end{figure}

Figure~\ref{fig:extended_pagerank_heatmap} reports the complete evaluation of all PageRank variants on the RWKU benchmark after 1{,}500 training steps, including per-group ranks, win counts, and a robustness score defined as the inverse of the mean standard deviation across forget targets (higher robustness indicates more consistent performance across the 20 unlearning targets).

PageRank-Softmax achieves the best overall rank (0.64) and the highest average improvement over the Base model (+8.1\%), with particularly strong performance on Forget metrics (Forget rank 0.95 on FB). Crucially, it also achieves the highest robustness score among the four main variants, suggesting that soft weight redistribution not only improves average-case forgetting but also reduces variance across targets, a practically important property when deploying unlearning at scale.

PageRank-Linear scores worst overall (rank 0.33) and wins no individual metric comparisons, confirming that hard ordinal reassignment introduces a different distortion that ultimately impairs learning. %PageRank occupy intermediate position with near-identical performance.

We also evaluate three exponential-rank variants (PageRank-ExpRank with $\tau \in \{0.1, 0.2, 0.5\}$), which apply an exponential decay to the rank ordering rather than a linear one. These variants interpolate between the hard ordinal signal of PageRank-Linear and the smooth compression of PageRank-Softmax. At $\tau = 0.5$, PageRank-ExpRank achieves competitive utility scores (Utility rank 0.77) and strong MIA resistance (MIA rank 1.00), but underperforms PageRank-Softmax on forgetting quality. The PageRank-Argmax variant, which concentrates all weight on the single highest-ranked node, produces the worst forgetting performance (Forget FB 0.467) despite high utility stability, confirming that weight concentration, rather than redistribution, is counterproductive for unlearning.

\subsection{Qualitative Weight Visualisation}

Figure~\ref{fig:graph_pagerank} shows the semantic graph and per-node weight distributions for the Stephen King forget set under all four main variants. In the PageRank, the \textit{Stephen King} node dominates visually and numerically, with \textit{The Shining} and \textit{The Stand} forming a high-weight core before the distribution drops sharply. PageRank-Softmax visibly redistributes weight across the graph, with more nodes rendered in mid-range colours, while PageRank-Linear produces a near-uniform gradient that treats all 61 nodes as roughly equivalent in importance.

Taken together, these results support the conclusion from Section~\ref{sec:ablations}: \emph{soft redistribution of weights, rather than hard re-ranking or simple rescaling, is the more effective strategy for PageRank-based reward design}. By compressing rather than discarding score information, PageRank-Softmax retains the semantic ordering induced by graph centrality while spreading penalty mass more evenly across the forget set, resulting in denser and more stable learning signals throughout training.

\begin{figure}[t]
    \centering
    \includegraphics[width=\textwidth]{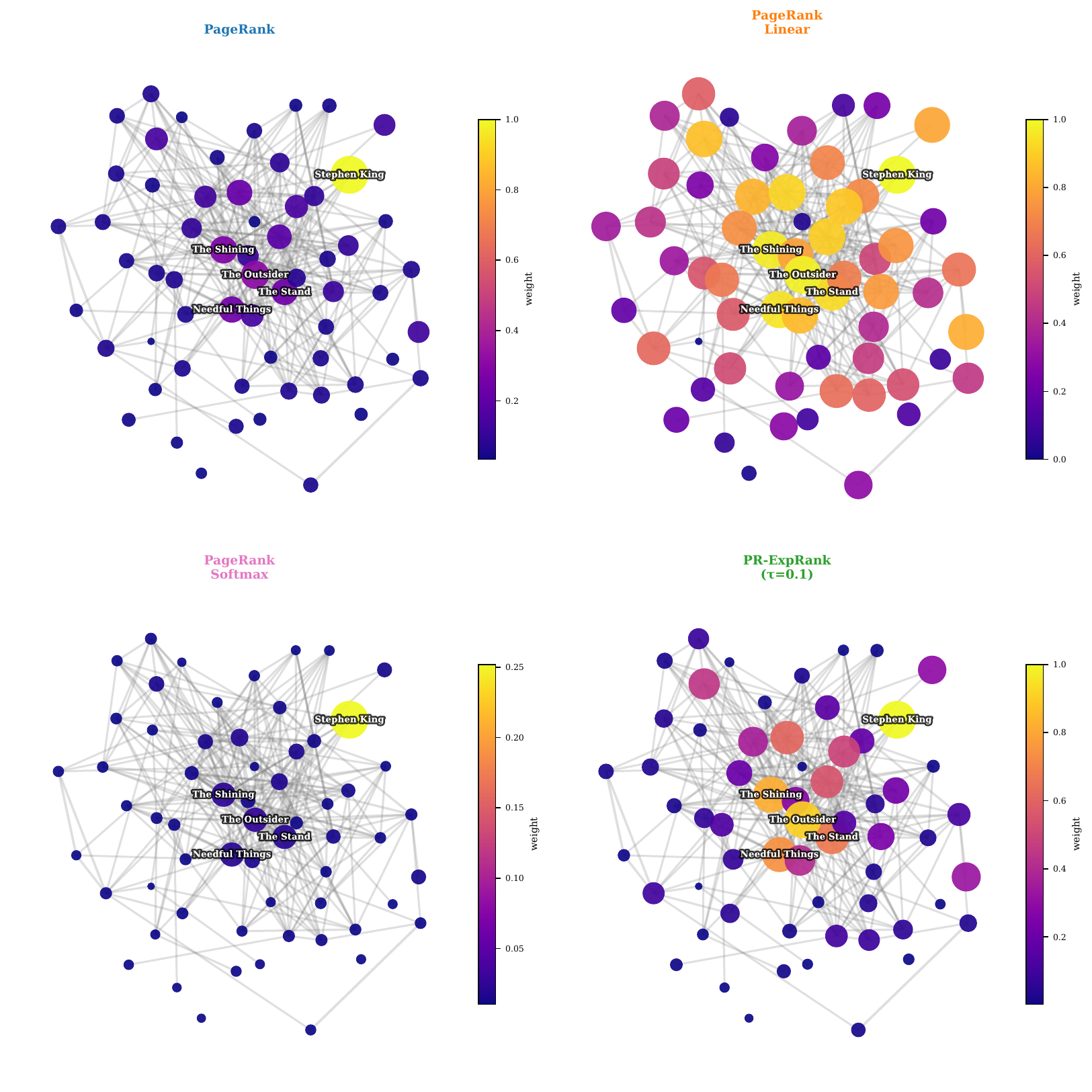}
    \caption{PageRank graph visualisations for the Stephen King forget set (61 terms) across four weight-transform variants. Node colour and size reflect per-node weight. PageRank distributes penalty mass more evenly than PageRank, while PageRank-Linear yields a near-uniform gradient.}
    \label{fig:graph_pagerank}
\end{figure}

\end{document}